\documentclass[pdflatex,sn-basic]{sn-jnl}

\usepackage{graphicx}
\usepackage{multirow}
\usepackage{amsmath,amssymb,amsfonts}
\usepackage{amsthm}
\usepackage{mathrsfs}
\usepackage[title]{appendix}
\usepackage{xcolor}
\usepackage{textcomp}
\usepackage{manyfoot}
\usepackage{booktabs}
\usepackage{bm}
\usepackage{url}

\theoremstyle{thmstyleone}
\newtheorem{proposition}{Proposition}

\newtheorem{observation}{Observation}
\theoremstyle{thmstylethree}

\newcommand{\R}{\mathbb{R}}
\newcommand{\E}{\mathbb{E}}
\newcommand{\Var}{\operatorname{Var}}
\newcommand{\Sig}{\Sigma}
\newcommand{\norm}[1]{\left\lVert #1 \right\rVert}
\newcommand{\LLR}{\ell}

\begin{document}

\title[Gauge-Fixing the Forward--Forward Objective]{Gauge-Fixing the Forward--Forward Objective:
A Whitened Goodness Derived from a Likelihood-Ratio Account}

\author[1]{\fnm{Paolo} \sur{Giannitrapani}}\email{paolo.giannitrapani@uniroma1.it}
\author*[1]{\fnm{Fabrizio} \sur{Frezza}}\email{fabrizio.frezza@uniroma1.it}

\affil[1]{\orgdiv{Department of Information Engineering, Electronics and Telecommunications
(DIET)}, \orgname{Sapienza University of Rome}, \orgaddress{\street{Via Eudossiana 18},
\city{Rome}, \postcode{00184}, \country{Italy}}}

\abstract{The Forward--Forward algorithm trains each layer locally, so that a scalar \emph{goodness} --- the
sum of squared activations --- is high on real inputs and low on contrastive ones. Under an explicit generative model this goodness is the sufficient statistic of a
likelihood-ratio test, and the pairwise form of the objective admits a gauge: a layer can lower its
loss by inflating the scale of its weights rather than by separating the two populations. The
analysis prescribes the repair --- a whitened, scale-invariant goodness trained online within each
layer --- which we evaluate as a training procedure. Across three corpora, three depths and a fourfold range of layer width ($13$ seeds
per cell), it raises linear-probe accuracy over the standard pairwise objective in every measured
cell --- by $4$ to $7$ points on eight of nine corpus-depth combinations --- and closes $16$--$61\%$
of the gap to end-to-end backpropagation. A control isolates the mechanism: Hinton's fixed-threshold loss also bounds the runaway,
to a factor of $1.4$ against $133$, yet tracks the unmodified baseline --- invariance to the gauge,
not a bound on it, is what pays. Against the strongest published alternative --- a sparse, top-$k$ goodness --- the derived
objective is statistically indistinguishable on two corpora of three, yet only it removes the
runaway: sparsity and gauge-invariance are independent axes, and the published variant recovers
accuracy while leaving the pathology in place. We state the boundaries we measured, and every prediction was recorded before its
experiment with the refutations reported.}

\keywords{Forward--Forward learning, Goodness function, Likelihood-ratio test,
Divisive normalization, Gaussian scale mixture, Local learning}

\maketitle

\section{Introduction}
\label{sec:intro}

The backpropagation of errors \citep{rumelhart1986learning} that underlies modern deep learning
is difficult to reconcile with known neural mechanisms --- it requires symmetric feedback weights
and a global, two-phase error signal \citep{lillicrap2020backpropagation} --- which has motivated a
sustained search for local learning rules, from feedback alignment
\citep{lillicrap2016random,nokland2016direct} to greedy and gradient-isolated layer-wise training
\citep{nokland2019training,belilovsky2019greedy,lowe2019putting}. Forward--Forward (FF) learning
\citep{hinton2022forward} is a recent and radical member of this family: it replaces the backward
pass with a local, per-layer objective, each layer computes a scalar \emph{goodness} of its activation vector
$\bm a\in\R^d$,
\begin{equation}
G(\bm a)=\norm{\bm a}^2=\sum_{i=1}^d a_i^2,
\label{eq:goodness}
\end{equation}
and is trained so that $G$ exceeds a threshold $\theta$ on \emph{positive} (real) inputs and falls
below it on \emph{negative} (corrupted or contrastive) inputs; the activation is then normalized
and passed to the next layer, which repeats the game. The appeal is architectural --- no weight
transport, no stored activations, constant memory in depth --- and the algorithm has attracted a
family of variants that modify the objective empirically (similarity-based objectives,
predictive-coding hybrids \citep{ororbia2023predictive}, cascaded block-wise predictors
\citep{zhao2023cascaded}, and objectives that couple the otherwise independent layers
\citep{lorberbom2023layer}). Yet the two central design
choices --- \emph{why the sum of squares}, and \emph{why (and how) normalize between layers} ---
have remained without a derivation: the logistic reading of the goodness is assumed rather than
obtained from a model of the activations, the closest theoretical precedent reads the goodness as
a one-class input \emph{density} rather than a two-class \emph{ratio} (\S\ref{sec:related}), and
the growing goodness literature searches the space of functional forms empirically.

This paper gives them a first-principles account, and then tests it. Our starting point is a
question of statistical decision theory: \emph{if a layer must decide ``positive or negative''
from its own activations, what is the optimal scalar to compute?} Modeling the activations
generatively turns the goodness from a heuristic into a testable object: every modeling assumption
(isotropy, Gaussianity, class difference in scale) becomes a measurable quantity, and every
generalization (anisotropy, heavy tails) yields a concrete alternative statistic whose predicted
behavior can be checked on trained networks.

\paragraph{Why the functional form matters for learning systems}
The stakes are not only interpretive. FF is one of the few training rules that dispenses with the
backward pass entirely, and its appeal --- no weight transport, no stored activations, memory
constant in depth --- is precisely what makes it a candidate for settings where backpropagation is
impractical: on-device and edge training under hard memory budgets, neuromorphic and analog
substrates on which a global two-phase error signal cannot be realized, and models of learning
constrained by what cortical circuits can plausibly compute. In every such setting the layer's
scalar objective \emph{is} the learning signal, so the question of which scalar to compute is a
design question, not a stylistic one. It is also a live question: as FF is scaled to deeper
convolutional backbones \citep{jiang2026covariance} and its objective is searched over empirically
\citep{shah2025search}, practitioners must choose a readout, an inter-layer normalization, and a
pairwise or per-sample loss --- and today those choices are made by trial. A derivation converts
each of them into a determinate answer with measurable conditions of validity: which readout is
optimal under which activation statistics (\S\ref{sec:llr}--\S\ref{sec:general}), which
normalization keeps deep goodness from collapsing as width grows (\S\ref{sec:norm}), and which
objective admits a shortcut that lets a network reduce its loss without learning
(\S\ref{sec:norm}). The account below is therefore a statement about how this class of networks
should be built and trained, tested on networks we build and train.

\paragraph{Contributions}
The account yields a training procedure, and the procedure is the paper's main empirical claim:
replacing the plain squared goodness by a whitened, gauge-invariant one raises the quality of the
representation Forward--Forward learns in every configuration we measured, on three corpora, three
depths and a fourfold range of width, and closes a substantial part of the distance to end-to-end
backpropagation (\S\ref{sec:method}). A control separates that gain from the obvious alternative
explanation: Hinton's fixed-threshold loss also bounds the runaway scale and does not gain, so what
pays is invariance to the gauge rather than a bound on it. A second comparison places the result in
the literature: the strongest published alternative, a sparse top-$k$ goodness, matches the derived
objective on two corpora of three while leaving the scale runaway untouched, so the two mechanisms
are independent and only one of them addresses the pathology identified here. The analytical results that motivate the
procedure are:
\begin{enumerate}
\item \textbf{The goodness as a sufficient statistic (\S\ref{sec:llr}).} Under a zero-mean
isotropic Gaussian model in which positives differ from negatives by scale, the log-likelihood
ratio is affine in the squared pre-activation norm $\norm{\bm z}^2$ (notation in
\S\ref{sec:llr}): Hinton's goodness is the sufficient statistic of the optimal
test and the FF threshold is its decision boundary, with a closed form for the optimal $\theta$.
The ReLU makes the deployed post-activation goodness a monotone surrogate of this statistic.
This turns the logistic reading attached to the goodness from an assumption into a derived
statement with explicit conditions (Proposition~\ref{prop:suff}), and
separates the two-class \emph{ratio} account from one-class density readings (\S\ref{sec:related}).
\item \textbf{Structured and heavy-tailed generalizations (\S\ref{sec:general}).} Anisotropic
populations give the Bayes statistic $\bm z^{\!\top}\bm M\bm z$ with
$\bm M=\Sig_-^{-1}-\Sig_+^{-1}$ (the plain square is $\bm M\propto\bm I$). Gaussian Scale
Mixtures (GSM) give a \emph{saturating} statistic $h(q)$ of the Mahalanobis form $q$, whose slope is a
difference of posterior precisions --- divisive normalization
\citep{heeger1992normalization,carandini2012normalization} --- with (i) a finite per-sample
evidence bound $\nu\log(s_+/s_-)$ (Student-$t$ case, closed form), (ii) \emph{no} effect on
single-sample ranking when monotone, and (iii) a provable advantage under aggregation across
locations, which we quantify. The mixer variable is precisely the GSM prior on which the
information-theoretic quality estimator VIF conditions \citep{sheikh2006image}, connecting the FF
goodness to perceptual-quality machinery.
\item \textbf{A theory of the normalization (\S\ref{sec:norm}).} Because the length of $\bm a$
\emph{is} the goodness, passing it on would let the next layer win trivially; normalization forces
each layer to extract conditionally new directional information. We characterize the correct
normalization --- remove the length, preserve unit per-coordinate energy --- prove that unit-norm
($\ell_2$) normalization violates it and collapses deep goodness as width grows, and show that the
pairwise FF objective admits a \emph{scale-inflation shortcut} (loss reducible by weight inflation
alone) that a whitened, negative-metric goodness gauges out.
\item \textbf{A measure-first empirical study (\S\ref{sec:exp}).} On convolutional FF networks
trained to separate clean natural images from distorted ones, with predictions recorded before
each experiment: the measured activation regime is exactly the one the theory flags as adverse for
the plain square (scale ratios $1.0$--$1.2$, participation ratios $0.02$--$0.09$, excess kurtosis
$2.4$--$5.0$); the structured readout delivers the predicted gains and sign correction; the GSM
tail index fitted independently matches the one implied by the measured kurtosis
($\hat\nu\in[4.9,6.8]$ vs.\ $5$--$6.5$); the predicted-null cases come out null \emph{for the
predicted reasons}; and whitened-goodness training both stabilizes the optimization and improves the
learned representation in every configuration measured in \S\ref{sec:method}.
\end{enumerate}

The paper is explanatory and constructive in that order: the identification comes first, and the
training procedure follows from it rather than from a search over objectives. We do not claim
state-of-the-art accuracy for Forward--Forward, which remains well behind end-to-end
backpropagation on every task we measure; the claim is that a specific, derived change to the local
objective recovers a substantial and reproducible part of that gap, and that a control identifies
which property of the change is responsible.

\paragraph{Relation to prior work (in brief)} The mathematical ingredients are classical; the
contribution is their \emph{identification} with the FF objective, and the consequences that
identification carries for readout, normalization, and training (Section~\ref{sec:related}). Two probabilistic readings of the goodness precede us
--- the logistic gloss of \citet{hinton2022forward} and the one-class input-density account of
\citet{zhou2022activation} --- and the recent goodness literature is benchmark-driven; we depart
from all three by modelling the \emph{two} populations FF contrasts and deriving which statistic
is optimal, in which metric, with which failure modes. Section~\ref{sec:related} makes the
positioning precise.

\section{Background and related work}
\label{sec:related}

\paragraph{Forward--Forward learning} FF sits in a lineage of local, backprop-free training. Feedback alignment replaces the
transposed weights of the backward pass with fixed random matrices
\citep{lillicrap2016random,nokland2016direct}; greedy layer-wise and gradient-isolated methods
train each block on a local objective, whether a supervised local loss
\citep{nokland2019training,belilovsky2019greedy} or a self-supervised mutual-information bound
\citep{lowe2019putting,oord2018representation}, and can scale to ImageNet
\citep{belilovsky2019greedy}. FF \citep{hinton2022forward} trains layers greedily with
the local goodness \eqref{eq:goodness} against positive/negative data, normalizing activations
between layers; negatives may be corrupted inputs, hybrid images, or contrastive samples. Reported
performance trails backpropagation on complex tasks, and subsequent work has explored alternative
local objectives and architectures: predictive-coding hybrids with lateral competition
\citep{ororbia2023predictive}, the cascaded forward algorithm that attaches a local predictor to
each block and dispenses with negative samples \citep{zhao2023cascaded}, and objectives that let
the otherwise independently-trained layers collaborate \citep{lorberbom2023layer}. These works modify the objective \emph{empirically}; our contribution is a
\emph{derivation} from which the plain square, structured quadratics, and saturating statistics
appear as special cases of one likelihood-ratio family, together with the conditions under which
each is optimal.

\paragraph{Probabilistic readings of the goodness}
A probabilistic gloss is present from the start: \citet{hinton2022forward} passes $G-\theta$
through a logistic, reading it as a log-odds --- but the reading is assumed, with no stated
activation model under which it is optimal. The closest theoretical precedent is
\citet{zhou2022activation}, which derives, from a local competition rule, that a sum-of-squares
activation estimates the \emph{input probability} $p(\bm x)$: a one-class, generative density
reading (goodness as typicality). The subsequent goodness literature is benchmark-driven,
treating the functional form as a hyperparameter to be searched --- \citet{shah2025search}
benchmark 21 candidates, a registry that even contains a one-class Gaussian energy and a whitened
energy as empirical entries, and that introduces normalized variants to stop the network from
``cheating'' by inflating weight scale. Our account differs from both readings in kind: we model
the \emph{two} activation populations that FF actually contrasts and identify the goodness as the
sufficient statistic of their likelihood \emph{ratio}, under assumptions that are measured rather
than posited (\S\ref{sec:s1}). This is what supplies the closed-form threshold, tells \emph{which}
quadratic is optimal and in \emph{which} metric (\S\ref{sec:general}), derives the saturating
statistic and its aggregation advantage, and formalizes the scale-inflation ``cheat'' as a gauge
symmetry with its removal (Observation~\ref{obs:gauge}) --- none of which is available from a
marginal-density or logistic reading, or from an empirical search over forms.

\paragraph{The goodness-function design space}
The choice of goodness is now an active empirical question. \citet{shah2025search} benchmark
twenty-one goodness functions across four corpora, among them a \emph{whitened energy} computed on
decorrelated features, and find several that outperform the sum of squares;
\citet{yuksel2026sparse} study the design space systematically and report large gains from
selective measurement, identifying sparsity as the dominant design choice. Both proceed by search
and comparison. Our starting point is complementary and, we believe, not otherwise available: the
form of the statistic is \emph{derived} from an explicit generative model rather than selected
empirically, which yields not only a candidate objective but the conditions under which it is
optimal, a mechanism for why the default fails, and a control that identifies which property of the
replacement is doing the work.

\paragraph{Second-order goodness at scale}
In concurrent and independent work, \citet{jiang2026covariance} observe
that the per-channel sum of squares retains only the diagonal of the uncentered covariance and
discards cross-channel structure, and augment the goodness with learnable cross-channel
projections and multi-scale spatial aggregation, reporting consistent per-layer gains on
ImageNet-scale benchmarks. That construction is arrived at empirically and motivated by
scalability; the quadratic family here is instead \emph{derived}, the Mahalanobis form appearing
as the Bayes-optimal statistic under a measured model and used diagnostically rather than as an
architecture. The two lines are complementary, and their result corroborates --- at a scale we do
not reach --- the claim that the structural, off-diagonal channel carries the discriminative
signal that the plain square discards.

\paragraph{Gaussian scale mixtures and divisive normalization} Responses of local filters to
natural images are strongly leptokurtic and are classically modeled as Gaussian scale mixtures
\citep{wainwright2000scale,portilla2003image}; divisive normalization is the canonical cortical
computation \citep{heeger1992normalization,carandini2012normalization}. We show these two
literatures meet inside the FF goodness: the Bayes statistic under a GSM \emph{is} an adaptive,
divisively-normalized energy, and the mixer is the same latent scale on which the VIF quality
estimator conditions \citep{sheikh2006image}.

\paragraph{Hypothesis testing} Our tools are classical --- sufficiency, monotone likelihood
ratios, and the Neyman--Pearson lemma \citep{lehmann2005testing}; the contribution is their
application to the FF objective and the resulting measurable predictions.

\section{The goodness as a likelihood-ratio statistic}
\label{sec:llr}

A layer maps a normalized input $\tilde{\bm x}$ to a pre-activation
$\bm z=\bm W\tilde{\bm x}\in\R^d$ and an activation $\bm a=\phi(\bm z)$ with
$\phi=\mathrm{ReLU}$. FF declares an input positive when $G(\bm a)>\theta$. We model the
pre-activation as the random object and treat the ReLU below.

\paragraph{Isotropic Gaussian model} Assume
\begin{equation}
\bm z\mid + \sim \mathcal N(\bm 0,\sigma_+^2\bm I_d),\qquad
\bm z\mid - \sim \mathcal N(\bm 0,\sigma_-^2\bm I_d),
\label{eq:isomodel}
\end{equation}
i.e.\ the two populations share direction statistics and differ by \emph{scale}. The
Bayes-optimal statistic is the log-likelihood ratio
$\LLR(\bm z)=\log p(\bm z\mid +)/p(\bm z\mid -)$, and a direct computation gives
\begin{equation}
\LLR(\bm z)=\underbrace{\tfrac12\Big(\tfrac1{\sigma_-^2}-\tfrac1{\sigma_+^2}\Big)}_{\alpha}
\,\norm{\bm z}^2+\underbrace{\tfrac d2\log\tfrac{\sigma_-^2}{\sigma_+^2}}_{\beta}.
\label{eq:llriso}
\end{equation}

\begin{proposition}[Goodness as sufficient statistic]
\label{prop:suff}
Under \eqref{eq:isomodel} with $\sigma_+>\sigma_-$, the LLR is strictly increasing in
$\norm{\bm z}^2$; hence $\norm{\bm z}^2$ is a sufficient statistic for the decision and the
optimal rule is ``positive iff $\norm{\bm z}^2>\tau$'' with
$\tau=-\beta/\alpha=d\log(\sigma_+^2/\sigma_-^2)\big/(\sigma_-^{-2}-\sigma_+^{-2})$.
\end{proposition}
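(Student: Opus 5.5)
The plan is a direct computation followed by a monotonicity argument. First write the two Gaussian densities explicitly, $p(\bm z\mid\pm)=(2\pi\sigma_\pm^2)^{-d/2}\exp\!\big(-\norm{\bm z}^2/(2\sigma_\pm^2)\big)$, and take the log of their ratio. The normalizing constants contribute $\tfrac d2\log(\sigma_-^2/\sigma_+^2)=\beta$. The exponents contribute $\tfrac12(\sigma_-^{-2}-\sigma_+^{-2})\norm{\bm z}^2=\alpha\norm{\bm z}^2$. This recovers \eqref{eq:llriso}, so the LLR depends on $\bm z$ only through $\norm{\bm z}^2$.

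Next, fix the sign of $\alpha$. Since $\sigma_+>\sigma_->0$, we have $\sigma_-^{-2}>\sigma_+^{-2}$, so $\alpha>0$. The map $t\mapsto\alpha t+\beta$ is therefore strictly increasing, and the LLR is strictly increasing in $\norm{\bm z}^2$.

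Sufficiency I would argue via the Fisher--Neyman factorization. Indexing the two hypotheses by the label, we write $p(\bm z\mid y)=g_y(\norm{\bm z}^2)\,h(\bm z)$ with $h\equiv1$. For decision purposes, the Bayes and Neyman--Pearson rules \citep{lehmann2005testing} threshold the LLR, so every optimal rule is a function of $\norm{\bm z}^2$.

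For the rule itself, the Bayes rule with equal priors and 0--1 loss declares ``positive'' iff $\LLR(\bm z)>0$, i.e.\ iff $\alpha\norm{\bm z}^2+\beta>0$. Dividing by $\alpha>0$, which preserves the inequality, gives $\norm{\bm z}^2>-\beta/\alpha=\tau$. Substituting, $-\beta=\tfrac d2\log(\sigma_+^2/\sigma_-^2)$ and $\alpha=\tfrac12(\sigma_-^{-2}-\sigma_+^{-2})$, so the factors $\tfrac12$ cancel and give the stated closed form. I would also note that $\tau>0$, since both numerator and denominator are positive. Under unequal priors or a Neyman--Pearson size constraint, the same monotonicity gives a threshold rule with $\tau$ shifted by the log prior odds or by the size constraint.

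The only real obstacle is interpretive rather than technical: stating precisely in what sense the rule is ``optimal''. The displayed $\tau$ corresponds to equal priors and symmetric loss, i.e.\ the LLR-zero boundary, and I would make that convention explicit. All other steps are routine algebra.
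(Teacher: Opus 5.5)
Your proof is correct and follows the paper's own argument: compute the log-likelihood ratio from the two Gaussian densities to get the affine form $\alpha\norm{\bm z}^2+\beta$, note that $\alpha>0$ when $\sigma_+>\sigma_-$, and solve $\LLR=0$ for $\tau$. The Fisher--Neyman factorization and your explicit statement that the displayed $\tau$ assumes equal priors and 0--1 loss are sound additions; the paper leaves both implicit.
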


Hinton's goodness is therefore not arbitrary: it is the exact sufficient statistic of an
isotropic-Gaussian scale test, and the FF threshold plays the role of the Neyman--Pearson
boundary. The single substantive assumption that makes the \emph{plain} goodness optimal is that
positives excite the learned filters more than negatives ($\sigma_+>\sigma_-$) \emph{isotropically}
--- an assumption we will measure. Hinton's own formulation already passes $G-\theta$ through a
logistic \citep{hinton2022forward}; Proposition~\ref{prop:suff} states the generative conditions
under which that log-odds reading is exactly Bayes-optimal, and supplies the boundary in closed
form. It also separates this two-class account from the one-class reading of
\citet{zhou2022activation}, in which a squared activation estimates the input density: the layer's
optimal scalar is a ratio between the two populations FF contrasts, not a marginal typicality.

\begin{proposition}[The deployed post-ReLU goodness]
\label{prop:relu}
FF evaluates the goodness on $\bm a=\mathrm{ReLU}(\bm z)$, so $G(\bm a)=\sum_i z_i^2\,\mathbf 1[z_i>0]$.
Under the isotropic scale model $\bm z\mid\pm\sim\mathcal N(\bm 0,\sigma_\pm^2\bm I)$:
\emph{(i)} $\mathrm{ReLU}$ is positively homogeneous, so $G$ is exactly scale-equivariant,
$G(\lambda\bm z)=\lambda^2G(\bm z)$ for $\lambda>0$, and the law of $G$ indexed by $\sigma^2$ is a
scale family; \emph{(ii)} the direction $\bm a/\norm{\bm a}$ is ancillary for $\sigma$, so
$\norm{\bm a}^2$ is sufficient for $\sigma$ given $\bm a$; \emph{(iii)} the family is
stochastically increasing in $\sigma$, so the FF threshold test on the deployed goodness is
unbiased with monotone power. Its likelihood ratio is increasing except on a lower-tail set ---
near-total rectification --- whose mass decays rapidly with width ($3\times10^{-2}$ at $d=16$,
$3\times10^{-5}$ at $d=64$, $2\times10^{-16}$ at $d=256$, for $\sigma_+/\sigma_-=1.3$): the
threshold test is uniformly most powerful outside a set of vanishing probability, and exactly so in
the wide-layer limit.
\end{proposition}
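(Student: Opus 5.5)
We will exploit the polar decomposition of the isotropic Gaussian and the fact that ReLU acts coordinatewise and is positively homogeneous. Write $\bm z=\sigma\bm u$ with $\bm u\sim\mathcal N(\bm 0,\bm I_d)$ under either hypothesis, so that the two populations share one base law and differ only by the scalar $\sigma$.

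For \emph{(i)}, note that $\mathrm{ReLU}(\lambda z)=\lambda\,\mathrm{ReLU}(z)$ for $\lambda>0$. Hence $G(\lambda\bm z)=\lambda^2G(\bm z)$, and $G(\bm z)\overset{d}{=}\sigma^2G(\bm u)$, which is a scale family indexed by $\sigma^2$ with base law $G(\bm u)$.

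For \emph{(ii)}, the law of $\bm a=\sigma\,\mathrm{ReLU}(\bm u)$ is the law of $\sigma\bm b$ with $\bm b=\mathrm{ReLU}(\bm u)$ fixed. The direction $\bm a/\norm{\bm a}=\bm b/\norm{\bm b}$ (defined on $\bm b\neq\bm 0$; the atom at $\bm 0$ has mass $2^{-d}$ under both hypotheses and is $\sigma$-free) therefore has a distribution not depending on $\sigma$, i.e.\ it is ancillary. To get sufficiency we will use the explicit density rather than Basu-type arguments. Conditioning on the active set $S=\{i:a_i>0\}$, which is itself ancillary since $P(S)=2^{-d}$, the density of $\bm a_S$ is $(2\pi\sigma^2)^{-|S|/2}\exp(-\norm{\bm a}^2/2\sigma^2)$. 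By the factorization theorem, $(\norm{\bm a}^2,|S|)$ is sufficient. Because $|S|$ is ancillary and is observed alongside $\bm a$, this yields the stated sufficiency of $\norm{\bm a}^2$ given $\bm a$'s support pattern. We will phrase the claim in exactly this conditional sense.

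For \emph{(iii)}, stochastic monotonicity is immediate from (i): $P(G>t)=P(G(\bm u)>t/\sigma^2)$ is increasing in $\sigma$, so the power of $\{G>\theta\}$ exceeds its size whenever $\sigma_+>\sigma_-$ (unbiasedness), and power is monotone in $\sigma_+$. The likelihood ratio of the \emph{marginal} law of $G$ is the delicate part, and we expect it to be the main obstacle. Marginalizing over $K=|S|\sim\mathrm{Bin}(d,\tfrac12)$, $G/\sigma^2$ is a binomial mixture of $\chi^2_K$ laws plus an atom at $0$, so
\[
\frac{p_+(g)}{p_-(g)}=\frac{\sum_{k\ge1}\binom dk\,\sigma_+^{-k}g^{k/2-1}e^{-g/2\sigma_+^2}/c_k}{\sum_{k\ge1}\binom dk\,\sigma_-^{-k}g^{k/2-1}e^{-g/2\sigma_-^2}/c_k}.
\]
For large $g$ this ratio is dominated by $e^{g\alpha}$ with $\alpha>0$ and is increasing. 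For small $g$ the mixture weights shift toward $k=1$, where $\sigma_-$ has the larger density, and monotonicity can fail. We will identify the non-monotone region as an interval $[0,g^*]$, which is the near-total-rectification regime, bound $g^*$ by an elementary comparison of the $k=1$ term against the rest, and then bound $P(G<g^*)$ via a lower-tail Chernoff bound on $K$ together with a $\chi^2$ small-ball bound. The specific masses quoted ($d=16,64,256$ at ratio $1.3$) we will obtain by numerically evaluating this mixture, since a clean closed form seems unlikely. Outside $[0,g^*]$, the Neyman--Pearson lemma applied to the monotone part gives the UMP claim. Since the exceptional mass vanishes as $d\to\infty$, exactness holds in the wide limit.
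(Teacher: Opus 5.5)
Parts (i) and the stochastic-ordering half of (iii) coincide with the paper's argument: positive homogeneity of ReLU, and the coupling $G=\sigma^2G_0$.

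For (ii) you take a different and sharper route. The paper writes $\bm z=\norm{\bm z}\bm u$, observes that $\bm a/\norm{\bm a}=\mathrm{ReLU}(\bm u)/\norm{\mathrm{ReLU}(\bm u)}$ has a $\sigma$-free law, and concludes that $\sigma$ enters only through $\norm{\bm a}$. That establishes ancillarity of the direction but not sufficiency, because direction and length are dependent through the number of active units. Your factorization, $p_\sigma(\bm a)\propto\sigma^{-|S|}e^{-\norm{\bm a}^2/2\sigma^2}$ up to $\sigma$-free factors, shows that the minimal sufficient statistic is $(\norm{\bm a}^2,|S|)$. So $\norm{\bm a}^2$ is sufficient only conditionally on the ancillary $|S|$. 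Keeping (ii) in that conditional form is right, and it is the precise version of what the paper asserts.

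The gap is in how you locate the non-monotone set. Take $\sigma_-=1$, $\sigma_+=r$, $\alpha=\tfrac12(1-r^{-2})$ and $Q(x)=\sum_{k\ge1}\binom dk x^{k/2}/c_k$. The density ratio of $G$ on $(0,\infty)$ is $e^{\alpha g}Q(g/r^2)/Q(g)$, and
\[
\frac{d}{dg}\log\frac{p_+(g)}{p_-(g)}=\alpha-\frac{\eta(g)-\eta(g/r^2)}{g},
\qquad
\eta(x)=\E_-\big[\tfrac K2\,\big|\,G=x\big].
\]
So the ratio decreases roughly where $\tfrac12\log r\,\Var_-(K\mid G=g)/g>\alpha$. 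This is a collective effect of the whole binomial mixture. Each component ratio $r^{-k}e^{\alpha g}$ is increasing, the components are ordered decreasingly in $k$, and the posterior weight drifts toward larger $k$ as $g$ grows.

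Your stated mechanism points the wrong way. Every component favours $\sigma_-$ near $0$, and $k=1$ favours it \emph{least} (ratio $1/r$); that is why the ratio rises as $g\to0^+$. Consequently no comparison of the $k=1$ term with the rest can bound $g^*$. At $d=16$, $r=1.3$ the ratio is still decreasing at $g=1.5$ and already increasing at $g=2$, so $g^*\approx1.7$. There $\E_-[K\mid G=g^*]\approx6$, far from the regime $g\lesssim10^{-2}$ in which the $k=1$ term dominates.

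The quoted $3\times10^{-2}$ is $P_-(G<g^*)$ for this $g^*$; a $k=1$ localization would return a mass orders of magnitude smaller. Since $g^*$ grows linearly in $d$ (about $0.1\,d$ at $r=1.3$), the vanishing of its mass is a lower large-deviation statement, not a small-ball one. What you need is an upper bound $g^*\le c\,d$ with $c<\tfrac12=\E G_0/d$. After that, the direct Chernoff bound with $\E e^{-\lambda X}=\tfrac12+\tfrac12(1+2\lambda)^{-1/2}$, where $X=z^2\mathbf 1[z>0]$, gives $e^{-I(c)d}$.

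Finally, ``Neyman--Pearson on the monotone part'' needs one more step. The ratio tends to $1/r$ as $g\to0^+$, and the atom at $0$ has ratio $1$; both exceed the ratio at $g^*$. So for thresholds with $p_+(\theta)/p_-(\theta)<1$, the NP region also contains part of $\{G\le g^*\}$. What holds is that the NP test of the same size has power at most
\[
P_+(G>\theta)+P_+(G\le g^*)\le P_+(G>\theta)+P_-(G\le g^*),
\]
using the stochastic ordering you already proved. This is the precise sense of ``UMP outside a set of vanishing probability''.
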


Rectification is not free. With $X=z^2\mathbf 1[z>0]$ one has $\E X=\tfrac12$ and $\E X^2=\tfrac32$,
so $\E G_0=d/2$ and $\Var G_0=5d/4$: the coefficient of variation rises from $\sqrt{2/d}$ for
$\norm{\bm z}^2$ to $\sqrt{5/d}$ for the deployed statistic (measured $0.176$ and $0.280$ at
$d=64$), costing AUC $0.87\to0.76$ at a scale ratio of $1.15$. The pre-activation analysis therefore
transfers to the statistic FF actually thresholds, at a quantified loss of power; our measurements
address $\bm z$, and Proposition~\ref{prop:relu} is what licenses that.

\section{Structured and heavy-tailed generalizations}
\label{sec:general}

\subsection{Anisotropy: the Mahalanobis goodness}
Dropping isotropy, $\bm z\mid\pm\sim\mathcal N(\bm 0,\Sig_\pm)$ gives
\begin{equation}
\LLR(\bm z)=\tfrac12\,\bm z^{\!\top}\bm M\,\bm z+\mathrm{const},
\qquad \bm M=\Sig_-^{-1}-\Sig_+^{-1}.
\label{eq:maha}
\end{equation}
The optimal goodness is a structured quadratic; the plain square is the special case
$\bm M\propto\bm I$ (common eigenbasis, pure scale difference). Whether a trained network lives in
that special case is an empirical question, answered by the generalized eigenvalues of
$(\Sig_+,\Sig_-)$: all equal $\Leftrightarrow$ pure scale $\Leftrightarrow$ plain square optimal;
spread $\Leftrightarrow$ structural difference $\Leftrightarrow$ Mahalanobis motivated. Since
$\bm M$ is generally indefinite, an energy-like variant is obtained by whitening,
$G=\norm{\bm L\bm z}^2$ with $\bm L^{\!\top}\bm L=\Sig_-^{-1}$, which will reappear in
\S\ref{sec:norm} as a trainable objective. The two are distinct statistics, and the condition under
which they agree is exactly the anisotropic form of the pure-scale hypothesis.

\begin{proposition}[When the whitened form is the likelihood ratio]
\label{prop:whiten}
The whitened goodness $\norm{\bm L\bm z}^2$ is positive semi-definite, whereas $\bm M$ is indefinite
in general, so the two are different statistics. They coincide up to a positive factor precisely
when the populations differ by a pure scale in the negative metric, $\Sig_+=c\,\Sig_-$ with $c>1$:
then $\bm M=(1-1/c)\,\Sig_-^{-1}$ is positive definite and
$\bm z^{\!\top}\bm M\bm z=(1-1/c)\norm{\bm L\bm z}^2$, so the two induce the same ranking and the
same test. Outside that condition the whitened form is a gauge-fixed proxy for the likelihood ratio,
not the likelihood ratio itself.
\end{proposition}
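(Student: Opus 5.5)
The plan is to treat the statement as three claims and handle them in order. Throughout we take $\Sig_\pm$ to be symmetric positive definite and $\bm L$ to be any matrix with $\bm L^{\!\top}\bm L=\Sig_-^{-1}$, for instance the Cholesky factor or the symmetric square root. The precise content of "coincide up to a positive factor" is that $\bm M=\kappa\,\Sig_-^{-1}$ for some $\kappa>0$, since then $\bm z^{\!\top}\bm M\bm z=\kappa\norm{\bm L\bm z}^2$ for every $\bm z$.

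First, semi-definiteness and indefiniteness. We have $\norm{\bm L\bm z}^2=\bm z^{\!\top}\Sig_-^{-1}\bm z\ge 0$, with equality only at $\bm z=\bm 0$, so the whitened goodness is in fact positive definite. For $\bm M$ we pass to the negative metric by substituting $\bm z=\Sig_-^{1/2}\bm u$. This gives
$$\bm z^{\!\top}\bm M\bm z=\bm u^{\!\top}\big(\bm I-\Sig_-^{1/2}\Sig_+^{-1}\Sig_-^{1/2}\big)\bm u .$$
The eigenvalues of the bracket are $1-1/\lambda_i$, where $\lambda_i$ are the generalized eigenvalues of $(\Sig_+,\Sig_-)$ introduced after \eqref{eq:maha}. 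So $\bm M$ is indefinite exactly when some $\lambda_i<1$ and some $\lambda_j>1$, and a diagonal $2\times2$ example exhibits this. This justifies "indefinite in general". A definite form and an indefinite one cannot be positive multiples of each other, so in that case the two are different statistics.

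Second, the characterization. For sufficiency, assume $\Sig_+=c\,\Sig_-$ with $c>1$. Then $\Sig_+^{-1}=c^{-1}\Sig_-^{-1}$, so $\bm M=(1-1/c)\Sig_-^{-1}$ with $1-1/c>0$, and the displayed identity follows at once. For necessity, assume $\bm M=\kappa\Sig_-^{-1}$ with $\kappa>0$. Then $\Sig_+^{-1}=(1-\kappa)\Sig_-^{-1}$. Positive definiteness of $\Sig_+^{-1}$ forces $\kappa<1$, so $\Sig_+=c\,\Sig_-$ with $c=1/(1-\kappa)>1$. Equivalently, in the $\bm u$ coordinates, all $1-1/\lambda_i$ equal a common positive value, so all $\lambda_i=c>1$. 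This matches the pure-scale criterion stated after \eqref{eq:maha}.

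Third, the same ranking and the same test. By \eqref{eq:maha}, $\LLR(\bm z)=\tfrac12(1-1/c)\norm{\bm L\bm z}^2+\mathrm{const}$, which is a strictly increasing affine function of $\norm{\bm L\bm z}^2$. It therefore induces the same ordering of samples, and every likelihood-ratio threshold $\LLR>\eta$ is a threshold $\norm{\bm L\bm z}^2>\tau'$. The argument of Proposition~\ref{prop:suff} then transfers verbatim in the whitened coordinates $\bm L\bm z$, and the Neyman--Pearson tests coincide. The closing sentence, that outside this condition the whitened form is only a proxy, is the contrapositive of necessity. No positive-factor relation holds there, so no strictly increasing affine map relates the two statistics.

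The main obstacle is pinning down what "coincide" should mean. Agreement of rankings under an arbitrary monotone map could in principle be weaker than proportionality. I would handle this by restricting the claim to positive affine equivalence, which is what the proposition displays. If a stronger statement is wanted, note that two quadratic forms whose superlevel sets are nested identically for every level must be positive multiples, because their level sets are homothetic conics. That observation would close the gap.
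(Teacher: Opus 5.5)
Your proof is correct and follows the paper's own argument. The paper proceeds as you do: positive definiteness of $\norm{\bm L\bm z}^2$ from $\bm L^{\!\top}\bm L=\Sig_-^{-1}\succ0$, the direct computation $\bm M=(1-1/c)\,\Sig_-^{-1}$ when $\Sig_+=c\,\Sig_-$, and the converse that $\bm M\propto\Sig_-^{-1}$ forces $\Sig_+^{-1}$ to be a multiple of $\Sig_-^{-1}$. Your generalized-eigenvalue account of indefiniteness and your check that $\kappa>0$ forces $\kappa<1$, hence $c>1$, make explicit what the paper leaves implicit (``indefinite unless one dominates the other'', and an unchecked sign of $c$ in the converse).
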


\noindent This is the anisotropic counterpart of the isotropic case that motivates the plain square:
$\bm M\propto\bm I$ recovers $\norm{\bm z}^2$, $\bm M\propto\Sig_-^{-1}$ recovers the whitened
energy, and the general indefinite $\bm M$ recovers neither.

\paragraph{Prediction residuals as members of the family}
Classical structural statistics obtained as \emph{prediction residuals} belong to the same
family: if $\bm P$ is a linear self-predictor of the field from a local basis --- e.g., the
weighted least-squares Hermite--Gauss structural prediction at the core of the partial-reference
estimator of \citet{giannitrapani2026partial} --- the residual energy
$\norm{(\bm I-\bm P)\bm x}^2$ is a quadratic form whose matrix is induced by a \emph{prior,
one-class model} of the positive (natural) population, rather than measured from the two
populations. Such statistics are thus model-based points in the family, between the plain square
($\bm M\propto\bm I$) and the fully measured Mahalanobis form: the account places hand-designed
structural channels and learned FF goodnesses on one axis --- which quadratic, in which metric,
chosen how.

\subsection{Heavy tails: the GSM likelihood ratio}
\label{sec:gsm}
Let $\bm z=\sqrt u\,\bm g$, $\bm g\sim\mathcal N(\bm 0,\Sig)$, $u\sim p(u)$ --- the Gaussian scale
mixture that describes local natural-image statistics \citep{wainwright2000scale}. With
$q(\bm z)=\bm z^{\!\top}\Sig^{-1}\bm z$,
\begin{equation}
\begin{aligned}
p(\bm z)&=(2\pi)^{-d/2}\lvert\Sig\rvert^{-1/2}F(q),\\[2pt]
F(q)&=\E_u\big[u^{-d/2}e^{-q/(2u)}\big],
\end{aligned}
\end{equation}
so for two classes with common structure and mixers $p_\pm$ the LLR is a scalar function of the
quadratic form, $\LLR=h(q)=\log F_+(q)-\log F_-(q)$.

\begin{proposition}[Adaptive precision; monotonicity]
\label{prop:gsm}
$-2\,\frac{d}{dq}\log F_\pm(q)=\E_\pm[u^{-1}\mid q]$, the posterior mean precision of the mixer
given the observation. Hence
\[
\begin{aligned}
h'(q)&=\tfrac12\big(\E_-[u^{-1}\mid q]-\E_+[u^{-1}\mid q]\big),\\[2pt]
h(q)&=\tfrac12\int_0^q\big(\E_-[u^{-1}\mid t]-\E_+[u^{-1}\mid t]\big)\,dt,
\end{aligned}
\]
and if $p_+/p_-$ is nondecreasing in $u$ (MLR order: positives favor larger scales) then $h$ is
nondecreasing. The optimal statistic accrues evidence as energy weighted by an adaptive posterior
precision --- divisive normalization; the mixer $u$ is the latent scale on which VIF conditions.
\end{proposition}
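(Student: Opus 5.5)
The plan is to differentiate under the integral in the definition of $F_\pm$ and read the result as a posterior expectation. Write
\[
F(q)=\int_0^\infty u^{-d/2}e^{-q/(2u)}\,p(u)\,du .
\]
Dominated convergence justifies bringing $\partial_q$ inside for $q>0$. The reason is that $u^{-d/2-1}e^{-q/(2u)}$ is bounded in $u$ for fixed $q>0$, uniformly for $q$ in compacts bounded away from $0$. Then
\[
F'(q)=-\tfrac12\int u^{-1}\,u^{-d/2}e^{-q/(2u)}p(u)\,du .
\]
By Bayes' rule, the posterior of the mixer given $\bm z$ depends on $\bm z$ only through $q$. The Gaussian factor $(2\pi u)^{-d/2}|\Sig|^{-1/2}e^{-q/(2u)}$ times $p(u)$ gives
\[
p(u\mid q)=u^{-d/2}e^{-q/(2u)}p(u)/F(q).
\]
Hence $-2F'(q)/F(q)=\E[u^{-1}\mid q]$, which is the first claim. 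Applying it to $F_+$ and $F_-$ and subtracting gives the formula for $h'(q)$. Integrating from $0$ with $h(0)=0$ gives the displayed integral. If $h(0)\neq 0$, i.e.\ if the normalizing constants differ, this is only up to an additive constant. In that case I would state the integral form modulo that constant, or note that $h(0)=\log F_+(0)-\log F_-(0)$ is absorbed into the threshold.

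The main step is monotonicity. It requires $\E_-[u^{-1}\mid q]\ge\E_+[u^{-1}\mid q]$ for every $q$. The posteriors are
\[
p_\pm(u\mid q)\propto k_q(u)\,p_\pm(u),\qquad k_q(u)=u^{-d/2}e^{-q/(2u)} .
\]
Their ratio is
\[
\frac{p_+(u\mid q)}{p_-(u\mid q)}=\frac{p_+(u)}{p_-(u)}\cdot\frac{F_-(q)}{F_+(q)}.
\]
This is nondecreasing in $u$ by the MLR hypothesis, because the common kernel cancels. So for each $q$ the posterior under $+$ dominates the posterior under $-$ in likelihood-ratio order. Likelihood-ratio order implies first-order stochastic dominance, and $u\mapsto u^{-1}$ is decreasing. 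Therefore
\[
\E_+[u^{-1}\mid q]\le\E_-[u^{-1}\mid q],
\]
so $h'\ge 0$ and $h$ is nondecreasing. I would include the one-line proof of the implication "LR order $\Rightarrow$ stochastic order" via the single-crossing of the two densities, or cite \citet{lehmann2005testing}.

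The obstacle I expect is regularity rather than the ordering argument. Two quantities may be infinite: $\E[u^{-1}\mid q]$ near $q=0$ when $p$ puts mass near $u=0$, and $F(0)$ itself when $d\ge 2$ and the mixer density does not vanish fast enough at $0$. I would first assume $\E_\pm[u^{-d/2-1}]<\infty$. That makes everything finite at $q=0$ and allows differentiation on $[0,\infty)$. Otherwise I would restrict to $q>0$ and take the integral from some $q_0>0$. The interpretive clause (divisive normalization, VIF) is definitional and needs no proof beyond noting that $h'$ weights energy increments by a $q$-dependent precision.
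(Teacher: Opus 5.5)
Your proposal is correct and follows the paper's proof essentially step for step. You differentiate $F$ under the integral, read $-2(\log F)'$ as the posterior mean of $u^{-1}$, and use the common tilt $u^{-d/2}e^{-q/2u}$ to pass the MLR order to the posteriors, which gives first-order dominance and so $\E_+[u^{-1}\mid q]\le\E_-[u^{-1}\mid q]$. Your caveat about the integral formula is not hypothetical: $h(0)=\log F_+(0)-\log F_-(0)$ is generically nonzero (it equals $-d\log r$ in the paper's own Student-$t$ example), so the displayed expression for $h(q)$ is correct only after adding $h(0)$, a correction the paper's proof does not make.
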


\begin{proof}[Proof sketch]
Differentiate $F$ under the integral; the tilted density $\propto u^{-d/2}e^{-q/2u}p_\pm(u)$ is
the posterior of $u$. Tilting both mixers by the same positive weight preserves their likelihood
ratio, so the MLR order passes to the posteriors, implying stochastic dominance; $u^{-1}$ is
decreasing, ordering the two expectations. Full details in \ref{app:proofs}.
\end{proof}

\paragraph{Worked example (Student-$t$)} With $u\sim\mathrm{InvGamma}(\nu/2,\nu/2)$ and scales
$s_+>s_-$ at common $\nu,\Sig$ (write $r=s_+/s_-$, $s_-{=}1$):
\begin{equation}
\begin{aligned}
h(q)&=-d\log r+\frac{\nu+d}{2}\Big[\log\big(1+\tfrac q\nu\big)-\log\big(1+\tfrac q{\nu r^2}\big)\Big]\\[2pt]
&\xrightarrow{\ q\to\infty\ }\ \nu\log r .
\end{aligned}
\label{eq:tllr}
\end{equation}
The LLR \emph{saturates} (Fig.~\ref{fig:hq}): one observation carries at most $\nu\log r$ nats of
evidence, however large its energy, because under heavy tails an extreme activation is common
under both hypotheses; as $\nu\to\infty$, \eqref{eq:tllr} recovers the affine Gaussian LLR
\eqref{eq:llriso}.

\begin{figure}[t]
\centering
\includegraphics[width=3.0in]{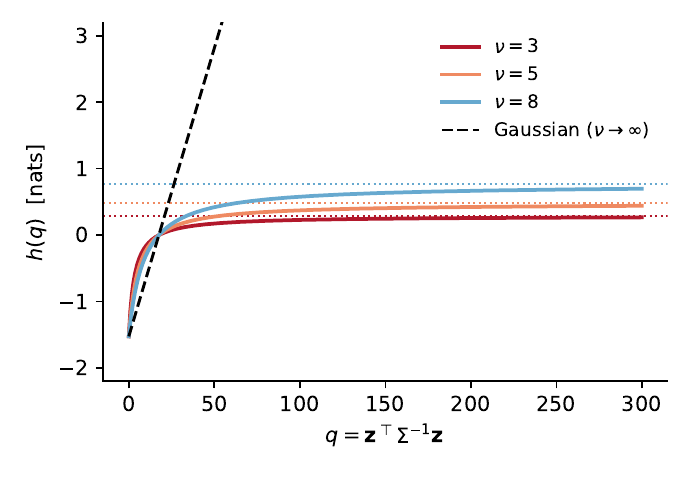}
\caption{The Student-$t$ GSM likelihood ratio $h(q)$ ($d{=}16$, $r{=}1.10$): saturating growth with
finite plateau $\nu\log r$ (dotted), against the unbounded affine Gaussian LLR (dashed).}
\label{fig:hq}
\end{figure}

\begin{figure}[t]
\centering
\includegraphics[width=3.0in]{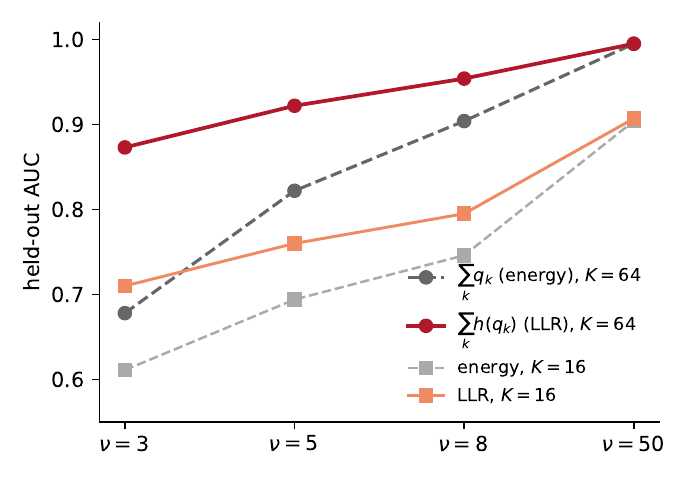}
\caption{Aggregation over $K$ i.i.d.\ locations (synthetic, $N{=}3000$ per class): the
Neyman--Pearson sum $\sum_k h(q_k)$ vs.\ the energy sum $\sum_k q_k$. The gap grows with tail
heaviness and $K$, and vanishes in the Gaussian limit.}
\label{fig:agg}
\end{figure}

\begin{proposition}[Where the nonlinearity matters]
\label{prop:agg}
(i) \emph{Single sample:} if $h$ is monotone, thresholding $h(q)$ and thresholding $q$ induce the
same ROC: heavy tails change evidence calibration, not single-sample ranking. (ii)
\emph{Aggregation:} for $K$ i.i.d.\ locations the Neyman--Pearson statistic is $\sum_k h(q_k)$,
not $\sum_k q_k$; the energy sum is unbounded in any single location and dominated by outliers,
while per-location evidence is bounded in $[-d\log r,\ \nu\log r]$, so the gap grows with tail
heaviness and with $K$.
\end{proposition}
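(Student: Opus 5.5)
For part (i) I will argue through the ROC as a collection of pairs $(\Pr(S>t\mid -),\Pr(S>t\mid +))$ over thresholds $t$. The ROC is a function of the ordering that $S$ induces and nothing else. If $h$ is strictly increasing, then $\{h(q)>t\}=\{q>h^{-1}(t)\}$ whenever $t$ lies in the range of $h$. Thresholds outside the range give the trivial operating points $(0,0)$ or $(1,1)$. Both statistics therefore trace the same set of operating points and have the same ROC curve. If $h$ is only nondecreasing, each superlevel set $\{h>t\}$ is still an upper ray $\{q>c\}$ or $\{q\ge c\}$, so every $h$-test is also a $q$-test. The converse can fail only on flat pieces of $h$, where a $q$-test splits a level set of $h$. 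On such a piece the likelihood ratio is constant, so randomized $h$-tests reproduce the same points. Since $q$ has a continuous law under both classes, the ROC is unchanged. By Neyman--Pearson, thresholding $h=\LLR$ gives the optimal ROC, so the optimal ROC is attained by thresholding $q$. This is the sense in which heavy tails change calibration but not ranking.

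For part (ii) I will use independence across locations. The joint likelihood factorizes, so the joint LLR is $\sum_k \LLR(\bm z_k)=\sum_k h(q_k)$ by Proposition~\ref{prop:gsm} and the GSM form. Neyman--Pearson then makes thresholding this sum most powerful at every level. The energy sum $\sum_k q_k$ generally induces a different ordering on $\R_+^K$ and is therefore suboptimal. The reason is that $h$ is not affine: it is concave-saturating, so level sets of $\sum h(q_k)$ are curved while those of $\sum q_k$ are hyperplanes. The two orderings coincide only if $h$ is affine, which is the Gaussian limit $\nu\to\infty$ of \eqref{eq:tllr}.

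For the per-location bounds I will use the Student-$t$ closed form \eqref{eq:tllr}. Setting $q=0$ gives $-d\log r$. The limit $q\to\infty$ gives $\nu\log r$. By Proposition~\ref{prop:gsm} (the MLR order holds for a scale family with $r>1$), $h$ is monotone, so $h(q)\in[-d\log r,\nu\log r]$. In contrast, $q_k$ is unbounded with tail mass $\propto q^{-\nu/2}$ up to polynomial factors. A single location can therefore dominate $\sum q_k$, but it can shift $\sum h$ by at most $(\nu+d)\log r$.

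The claim that the gap grows with tail heaviness and with $K$ is the main obstacle, since it is quantitative rather than structural. I will not attempt a general monotonicity theorem. Instead I will make it precise asymptotically and leave the rest to the synthetic evaluation of Fig.~\ref{fig:agg}. The asymptotic argument compares efficacies. For large $K$ both sums are approximately Gaussian by the CLT, with per-location mean separation $\Delta\mu$ and variance $V$. Detection power depends on $K(\Delta\mu)^2/V$, so the AUC gap between the statistics grows with $K$ whenever their per-location efficacies differ. The per-location efficacy ratio is $\mathrm{corr}(q,h(q))^2$ under the null. This equals $1$ in the Gaussian case. As $\nu$ decreases it falls, because $\Var q$ blows up as $\nu\downarrow 4$ while $h(q)$ stays bounded. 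The step I expect to need care is computing or bounding this correlation as a function of $\nu$, since it controls the growth of the gap with tail heaviness.
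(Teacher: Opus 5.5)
Your part (i) and the structural half of (ii) follow the paper's proof. The appendix argues only three things: a monotone transform of a scalar statistic preserves the ordering and hence the ROC; $\sum_k h(q_k)$ is the Neyman--Pearson statistic of the product likelihood; and the bounds are $h(0)=-d\log r$ and $\lim_{q\to\infty}h(q)=\nu\log r$ from \eqref{eq:tllr}. You are more careful than the paper in two respects. The randomization argument on flat pieces covers the merely nondecreasing $h$ that the statement assumes, whereas the paper's proof uses strict monotonicity. You also make explicit that monotonicity is what turns the two endpoint values into a two-sided bound. For that monotonicity, do not appeal to an MLR property of scale families of mixers, which does not hold in general. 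Differentiate \eqref{eq:tllr} directly instead: $h'(q)=\tfrac{\nu+d}{2}\,\nu(r^2-1)/\big((\nu+q)(\nu r^2+q)\big)$ is positive and decreasing. That gives both the bound and the concavity you invoke later.

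The paper does not prove the clause that the gap grows with tail heaviness and with $K$; it rests on the synthetic evaluation of Fig.~\ref{fig:agg}. Deferring to it is therefore consistent with the paper, but the asymptotic argument you offer in its place would not go through as written, for two reasons.

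\emph{First}, the CLT for $\sum_k q_k$ requires $\Var q<\infty$. Since $P(q>x)\sim Cx^{-\nu/2}$, this holds only for $\nu>4$, so it excludes $\nu=3$, the case with the largest reported gap.

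\emph{Second}, for fixed $r>1$ both AUCs tend to $1$ as $K\to\infty$, so their difference must eventually shrink. Unequal per-location efficacies make the difference of standardized separations grow like $\sqrt K$, but the AUC difference grows only until the two AUCs saturate. Relatedly, $\mathrm{corr}(q,h(q))^2$ is an efficiency ratio only in the local regime $r\to1$, where $h$ is proportional to the score $-d+(\nu+d)q/(\nu+q)$.

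A formalization that works at fixed $r$ and for every $\nu>2$ compares error rates instead.
\begin{itemize}
\item Since $h$ is bounded, Hoeffding gives $1-\mathrm{AUC}$ of order $e^{-cK}$ for $\sum_k h(q_k)$.
\item For $\sum_k q_k$, a single large $q_k$ under the negative class suffices to reverse the comparison. This gives $1-\mathrm{AUC}$ of order $K^{1-\nu/2}$, which is polynomial and slower the heavier the tail.
\end{itemize}
That contrast is the precise content of ``dominated by outliers'' and of the growth with tail heaviness, and it needs no CLT.
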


Numerically (Fig.~\ref{fig:agg}; parameters $d{=}16$, $r{=}1.10$): the plateau matches
$\nu\log r$ to four decimals; $\mathrm{AUC}(q)=\mathrm{AUC}(h(q))$ exactly on single samples
(AUC: area under the ROC curve; $0.5$ is chance); and
the aggregation gap is $+0.195$ AUC at $\nu{=}3,K{=}64$, $+0.099$ at $\nu{=}5$, $+0.000$ at
$\nu{=}50$. These synthetic magnitudes calibrate what to expect on real activations once $\nu$ and
the scale ratio are \emph{measured} (\S\ref{sec:exp}).

\paragraph{What a partial reference adds}
Proposition~\ref{prop:gsm} contains, in passing, an account of why magnitude estimation requires
a reference --- and of how little reference suffices. The slope $\E_\pm[u^{-1}\mid q]$ is the
detector's \emph{internal, test-side} estimate of the local content scale: marginally, content
and corruption are confounded in $q$, and the posterior over the mixer is the best the distorted
image alone can supply --- which is precisely why the statistic detects presence without ranking
magnitude (\S\ref{sec:multi}). A partial reference replaces this posterior with an
\emph{observation}: the clean-side value of the same scalar, one number per locality, turning the
marginal test into a conditional one in which departures of $q$ from a content-matched baseline
are attributable to the corruption. This is the design logic of reference-based quality
assessment in the GSM family --- VIF conditions on exactly this latent scale
\citep{sheikh2006image}, and partial-reference estimators reduce the reference to scalars of this
kind \citep{giannitrapani2026partial}. The detection--estimation boundary is therefore not a wall
but a coordinate statement: what is missing for magnitude is one scalar of reference context per
locality, not the reference image. We leave the corresponding experiment --- conditioning the
learned goodness on a single reference scalar --- to future work.

\section{Normalization across depth, and a gauge problem in the objective}
\label{sec:norm}

\paragraph{Why normalize} Because $\norm{\bm a}^2$ is (a surrogate of) the sufficient statistic,
the length of $\bm a$ already encodes the layer's decision. Passed on raw, the next layer could
make its goodness large by reading the incoming length, learning nothing conditionally new.
Passing the direction forces each layer to extract discriminative information beyond the running
lengths --- informally, a chain decomposition
\begin{equation}
\begin{aligned}
&I(\text{label};\bm a_1,\dots,\bm a_L)\ \approx\\
&\quad\sum_{k=1}^L I\big(\text{label};\hat{\bm a}_k\,\big|\,\norm{\bm a_1}^2,\dots,\norm{\bm a_{k-1}}^2\big),
\end{aligned}
\label{eq:chain}
\end{equation}
an informal statement of the design intent --- each layer should contribute information beyond
the running lengths --- which the normalization implements; we do not use \eqref{eq:chain}
quantitatively.

\paragraph{Which normalization} Unit-norm ($\ell_2$) and RMS normalization pass the same
direction and differ only in scale: $\hat{\bm a}=\bm a/\norm{\bm a}$ hands the next layer
per-coordinate energy $1/d$, while $\tilde{\bm a}=\bm a/\mathrm{rms}(\bm a)$ hands it $1$.

\begin{proposition}[Depth stability]
\label{prop:depth}
If the next layer's pre-activation scales linearly with its input norm and its goodness is the
mean squared activation, then under $\ell_2$ normalization the downstream goodness is
$\Theta(1/d)$ and collapses as width grows, while under RMS normalization it is scale-stable.
\end{proposition}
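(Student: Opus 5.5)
The argument is a direct scaling computation under the two hypotheses the statement makes explicit. I would write the next layer as $\bm z'=\bm W'\bm x'$, where $\bm x'$ is the normalized input it receives. The first hypothesis, linear scaling in the input norm, I would formalize as: for fixed weights and fixed input direction $\bm u=\bm x'/\norm{\bm x'}$, $\bm z'=\norm{\bm x'}\,\bm W'\bm u$. By the positive homogeneity of ReLU (Proposition~\ref{prop:relu}(i)), $\bm a'=\norm{\bm x'}\,\phi(\bm W'\bm u)$. The second hypothesis, goodness as the mean squared activation, gives
\[
G'=\tfrac1{d'}\norm{\bm a'}^2=\norm{\bm x'}^2\,\kappa(\bm W',\bm u),\qquad \kappa=\tfrac1{d'}\norm{\phi(\bm W'\bm u)}^2 .
\]
So the downstream goodness factors into the squared input norm times a quantity that depends only on the weights and the direction. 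Since $\ell_2$ and RMS normalization pass the same direction, $\kappa$ is the same under both, and the whole comparison reduces to $\norm{\bm x'}^2$.

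The two cases then follow at once. Under $\ell_2$ normalization $\norm{\hat{\bm a}}^2=1$, so $G'=\kappa$. Under RMS normalization $\norm{\tilde{\bm a}}^2=d$, so $G'=d\,\kappa$. The ratio between the two is exactly $d$, whatever the weights and the input.

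The remaining step, and the main obstacle, is showing that $\kappa$ itself is $\Theta(1/d)$, i.e.\ fixing the reference scale of the weights. I would adopt the standard width-independent initialization or parametrization convention: entries of $\bm W'$ have variance $c/d$, the one that keeps per-coordinate pre-activation variance $O(1)$ when the input has unit per-coordinate energy. With that convention, for unit-norm $\bm u$, each $z'_i=\bm w_i^{\top}\bm u$ has variance $c/d$. Then $\E\,\kappa=\tfrac{c}{2d}$, using $\E[z^2\mathbf 1[z>0]]=\tfrac12\Var z$ from the computation after Proposition~\ref{prop:relu}. Concentration over the $d'$ output units gives $\kappa=\Theta(1/d)$ with high probability. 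Consequently $G'_{\ell_2}=\Theta(1/d)$, which collapses as width grows, while $G'_{\mathrm{RMS}}=\Theta(1)$, which is scale-stable.

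I would state the weight convention explicitly as the meaning of "scales linearly with its input norm". Without a width-normalized reference, weights could absorb the factor $d$; that is precisely the scale gauge taken up later in this section, and I would flag it.

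Finally, I would iterate across depth. Each layer re-normalizes its input, so the $\Theta(1/d)$ factor does not compound; it applies freshly at every layer beyond the first.
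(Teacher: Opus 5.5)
Your proposal is correct and follows the paper's own argument. The paper likewise factors the downstream mean squared activation into the incoming squared norm $c$ times a weight-dependent factor, written $c/d\cdot\kappa(\bm W)$, and reads off $c=1$ under $\ell_2$ against $c=d$ under RMS; what you add is making explicit the fan-in weight convention that the paper's $1/d$ silently encodes, and checking that your factor $\kappa$ is $\Theta(1/d)$ at initialization, which clarifies the argument without changing it.
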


The statement is relative to the protocol FF fixes: $\ell_2$ and RMS normalization differ by the
factor $\sqrt d$, so the collapse is what happens when a width-independent threshold and
initialization scale are held fixed while $d$ grows --- a statement about a scale convention and the
hyperparameters kept constant beside it, not an intrinsic property of the representation. That is
nonetheless the operative regime, since FF fixes the threshold and the initialization independently
of width. It is a falsifiable design statement, and it is the failure mode we in fact encountered:
training with $\ell_2$ normalization produced deep-layer goodness $\approx 0$ at widths $64$,
restored to $O(1)$ by RMS --- the correct normalization removes the length while preserving unit
per-coordinate energy. The same pair of requirements --- normalized inputs and a
magnitude-preserving nonlinearity, with weight norms self-regularizing --- is reached
independently by \citet{zhou2022activation} from a reconstruction argument: two derivations of one
normalization necessity from different starting points. The requirement is also visible in
practice: scaling FF to deep convolutional backbones,
\citet{jiang2026covariance} adopt RMS normalization and RMS pooling precisely because they
preserve the energy relation on which their goodness depends --- the scale-stable choice
identified by Proposition~\ref{prop:depth}, arrived at from engineering necessity.

\paragraph{A scale-inflation shortcut in the pairwise objective}
FF is often trained with the pairwise (contrastive) loss
$\mathcal L=\mathrm{softplus}(g_--g_+)$ on the goodness. ReLU networks are positively homogeneous:
under $\bm W\to\lambda\bm W$ (one layer), $\bm z\to\lambda\bm z$ and $g\to\lambda^2 g$, so any
positive gap is amplified as $\lambda^2(g_+-g_-)$ and the loss \emph{decreases along the ray of
weight inflation without new information}.

\begin{observation}[Gauge fixing by whitening]
\label{obs:gauge}
The negative-metric goodness $G(\bm z)=\tfrac1C\overline{\bm z^{\!\top}\hat\Sig_-^{-1}\bm z}$,
with $\hat\Sig_-$ a running (EMA) covariance of negative pre-activations, is invariant under
$\bm W\to\lambda\bm W$ at metric equilibrium ($\hat\Sig_-\to\lambda^2\hat\Sig_-$): the
scale-inflation shortcut is gauged out. (Hinton's fixed-threshold loss also bounds the shortcut,
through the $g_-$ term; the pairwise variant does not.)
\end{observation}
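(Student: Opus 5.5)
The plan is to verify the invariance directly from the homogeneity already used to exhibit the shortcut, and then to contrast the two losses named in the parenthetical. Fix a layer and apply the rescaling $\bm W\to\lambda\bm W$ with $\lambda>0$. Since $\bm z=\bm W\tilde{\bm x}$ is linear in $\bm W$, every pre-activation, positive or negative, maps as $\bm z\to\lambda\bm z$. The EMA covariance $\hat\Sig_-$ is a convex combination of outer products $\bm z_-\bm z_-^{\!\top}$ of negative pre-activations. Once the EMA has equilibrated on the rescaled weights, which is what ``metric equilibrium'' means, it therefore satisfies $\hat\Sig_-\to\lambda^2\hat\Sig_-$, and so $\hat\Sig_-^{-1}\to\lambda^{-2}\hat\Sig_-^{-1}$. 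I would state this as the precise hypothesis and note that it is exact for the population covariance. For the EMA it holds in the limit, since the EMA is a geometrically weighted average and its stale terms decay.

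Next I substitute into the quadratic form location by location:
\[
(\lambda\bm z)^{\!\top}(\lambda^{-2}\hat\Sig_-^{-1})(\lambda\bm z)=\bm z^{\!\top}\hat\Sig_-^{-1}\bm z .
\]
Averaging over spatial locations (the overline) and dividing by the fixed channel count $C$ are linear operations independent of $\lambda$, so $G$ is unchanged for every input. Consequently $g_+$, $g_-$ and $\mathrm{softplus}(g_--g_+)$ are constant along the ray $\{\lambda\bm W\}$. The loss has zero derivative in the radial direction, so the descent direction that previously amplified $\lambda^2(g_+-g_-)$ is removed: the gauge is fixed. I would also remark on the earlier observation that whitening makes $G$ a function of $\hat\Sig_-^{-1/2}\bm z$, which is dimensionless. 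The same argument then extends to any per-channel or invertible linear reparametrization $\bm W\to\bm A\bm W$, since then $\hat\Sig_-\to\bm A\hat\Sig_-\bm A^{\!\top}$. This is stronger than needed, but it shows that the invariance is structural rather than accidental.

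For the parenthetical, I compare with Hinton's per-sample loss $\mathrm{softplus}(\theta-g_+)+\mathrm{softplus}(g_--\theta)$ with fixed $\theta$ and the plain goodness. Along the ray, $g_\pm\to\lambda^2 g_\pm$. The second term grows without bound as $\lambda\to\infty$ whenever $g_->0$, so inflation is penalised and the loss along the ray has a finite minimiser: the shortcut is bounded, but the loss is not invariant. In the pairwise loss, by contrast, only the difference enters, and it decreases monotonically to $0$ whenever $g_+>g_-$, so the loss is unbounded in its incentive to inflate.

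The main obstacle is the equilibrium hypothesis. During training the EMA lags the weights, so transiently $\hat\Sig_-$ does not scale by exactly $\lambda^2$, and invariance holds only up to an error controlled by the EMA decay rate. I would handle this by stating the result at the fixed point of the EMA rather than claiming exact invariance of each stochastic step. I would also add a ridge term $\epsilon\bm I$ if it is used for invertibility, noting that it breaks exact invariance only at scales comparable to $\epsilon$.
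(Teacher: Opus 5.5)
Your proposal is correct and takes the paper's own approach: the homogeneity $\bm z\to\lambda\bm z$ combined with the equilibrium scaling $\hat\Sig_-\to\lambda^2\hat\Sig_-$ leaves the quadratic form unchanged at every location, and your analysis of the two losses along the ray matches the paper's parenthetical (the extension to $\bm W\to\bm A\bm W$ is a correct addition that the paper does not state). Two implementation details bear on your caveats: the paper's ridge is relative ($10^{-3}$ times the mean diagonal of $\hat\Sig_-$), so it also scales by $\lambda^2$ and the invariance stays exact; and because the metric is detached from the gradient, the zero radial derivative belongs to the equilibrium loss rather than to each gradient step, which is the lag you already flag.
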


The benchmark literature registers the symptom --- normalized-energy variants are introduced
precisely to stop the network from ``cheating'' by inflating weight scale \citep{shah2025search};
Observation~\ref{obs:gauge} supplies the mechanism and the gauge that removes it.
Observation~\ref{obs:gauge} predicts a qualitative difference in training dynamics --- unbounded
goodness growth for the plain pairwise objective, a plateau for the whitened one --- which
\S\ref{sec:exp} confirms, and it realizes the whitened quadratic of \S\ref{sec:general} as a
\emph{trainable} objective rather than a post-hoc readout.

\section{Empirical study}
\label{sec:exp}

\paragraph{Protocol} All experiments follow a measure-first protocol: the quantities that decide
between the models of \S\ref{sec:llr}--\S\ref{sec:general} are measured on trained networks, and
the outcome of each subsequent experiment is predicted in writing before it is run (predictions
are reproduced in the text below); null outcomes are reported as findings. Code, commands and the prediction log are archived (\ref{app:repro}).

\paragraph{Setup} Convolutional FF with three layers ($32/64/64$ channels, stride $2$,
$5{\times}5$ then $3{\times}3$ kernels), RMS inter-layer normalization
(Proposition~\ref{prop:depth}), pairwise loss, Adam \citep{kingma2015adam}, $64{\times}64$ RGB
inputs; positives are Imagenette \citep{howard2019imagenette} training images ($300$/class);
negatives are distorted versions of natural images (Gaussian blur, JPEG, JPEG\,2000, white noise)
in two arms: \emph{naive} (random uncalibrated severities, regenerated each epoch) and
\emph{curated} (a fixed pool with severities calibrated to the informative band of a perceptual
scale by the partial-reference estimator of \citet{giannitrapani2026partial}; details in
\ref{app:repro}).
Representation quality is a linear probe on the $10$ Imagenette classes ($1000$ validation
images); perceptual alignment is the Spearman correlation (SROCC) between per-image goodness and
human DMOS on the CSIQ dataset \citep{larson2010most} ($n{=}750$; the contrast subset is excluded
as out of the distortion domain). Statistics use pre-activations (Proposition~\ref{prop:relu}), pooled
over random spatial locations, with class covariances estimated per layer; readout evaluations
split images $50/50$ into fit and held-out halves. A moderate-scale natural-image corpus is
appropriate here because the object under test is the \emph{activation regime} the theory speaks
to, not benchmark accuracy: what the predictions require is that trained networks exhibit the
relevant scale, anisotropy, and tail structure, which Table~\ref{tab:s1} confirms they do; a
larger corpus would raise cost without changing what is being tested. (This matches the scale of
the FF literature it engages, which is developed on MNIST- and CIFAR-scale data.) The evaluation of
the derived objective in \S\ref{sec:method} is broader: three corpora, three depths and three
widths.

\subsection{The measured activation regime}
\label{sec:s1}

Table~\ref{tab:s1} reports, per layer and arm: the goodness ratio
$\E[G\mid+]/\E[G\mid-]$ (the scale premise of Proposition~\ref{prop:suff}); the excess kurtosis of
pre-activations (Gaussian vs.\ GSM); the participation ratio of the covariance spectrum
(isotropy); and the spread (p90/p10) of the generalized eigenvalues of $(\Sig_+,\Sig_-)$ (pure
scale vs.\ structural difference).

\begin{table}[t]
\caption{Activation statistics of trained FF networks. The regime is adverse for the plain
square: thin scale channel, heavy tails, strong anisotropy, structural class difference.}
\label{tab:s1}
\centering\footnotesize
\setlength{\tabcolsep}{1.0pt}
\begin{tabular}{llccccc}
\toprule
arm & layer & $\E[G|+]/\E[G|-]$ & kurt$_+$ & kurt$_-$ & partic.\ ratio & gen-eig spread\\
\midrule
naive   & L0 & 1.04 & $+2.4$ & $+3.5$ & 0.09 & 5.8\\
naive   & L1 & 1.10 & $+2.5$ & $+2.7$ & 0.04 & 2.5\\
naive   & L2 & 1.24 & $+3.4$ & $+5.0$ & 0.06 & 5.2\\
curated & L0 & 1.02 & $+0.7$ & $+1.1$ & 0.08 & 4.1\\
curated & L1 & 1.01 & $+0.3$ & $+0.3$ & 0.02 & 6.6\\
curated & L2 & 1.01 & $+0.8$ & $+0.6$ & 0.03 & 4.9\\
\bottomrule
\end{tabular}
\end{table}

Every diagnostic lands on the adverse side for the plain square: the scale premise is thin
(ratios $1.0$--$1.2$; essentially absent for perceptually mild negatives), activations are
heavy-tailed (GSM territory; the implied Student-$t$ index from
$\kappa=6/(\nu-4)$ is $\nu\approx5$--$6.5$), covariances are strongly anisotropic, and the class
difference is \emph{structural}, not a scale change. By \eqref{eq:maha}, the isotropic goodness
sees only the (nearly empty) trace channel of this difference. This measurement, made first,
generates the predictions tested next.

\subsection{The structured readout}
\label{sec:gate1}

\emph{Recorded prediction: the Mahalanobis readout will clearly exceed the energy readout on
held-out clean-vs-distorted AUC (the measured spread demands it); on the perceptual axis honesty
requires agnosticism, with $|\rho|<0.3$ read as ``structure discriminates presence, not
severity''.}

Post hoc, with $\bm M$ fitted on the fit half (ridge $10^{-3}$): Table~\ref{tab:gates}. The structured readout improves AUC at every layer of the naive arm ($0.616\to0.715$ at L0); in
the curated arm the energy channel is empty and all readouts sit at chance ($0.49$--$0.54$), so
the comparison there is uninformative. It also --- a
theoretically meaningful detail --- \emph{restores the required sign} of the perceptual
correlation at all layers (the LLR direction demands cleaner $\Rightarrow$ higher goodness
$\Rightarrow$ negative $\rho$; the raw energy even carries the wrong sign at depth). Magnitudes
stay modest, matching the recorded prediction above: the structural channel discriminates the presence of distortion, not its magnitude
($|\rho|\le0.16$). On the detection axis the absolute level is itself informative: the optimal
quadratic readout of \emph{learned} FF features lands in the same weak-to-modest band (held-out
AUC $0.51$--$0.72$) that we measure for \emph{learned} readouts on the \emph{fixed} Hermite--Gauss
structural-prediction basis of \citet{giannitrapani2026partial}, in side probes on LIVE R2 and
CSIQ ($0.63$--$0.82$, JPEG hardest; protocol in \ref{app:repro}, per-cell values with the code). Two routes that place
the learning at opposite ends --- fixed prior features with a learned readout, learned features
with the Bayes-optimal quadratic readout --- meet in one band: the pattern an information ceiling
of local structure predicts, and a representation limit does not.

\begin{table*}[t]
\caption{Detection and perception for the three readouts: held-out AUC (clean vs.\ distorted) and CSIQ SROCC;
$\hat\nu_\pm$ are the tail indices fitted per class. Bold: the theoretically decisive entries.}
\label{tab:gates}
\centering\small
\setlength{\tabcolsep}{3.5pt}
\begin{tabular}{llcc ccc ccc}
\toprule
& & \multicolumn{2}{c}{$\hat\nu$} & \multicolumn{3}{c}{AUC} & \multicolumn{3}{c}{SROCC vs DMOS}\\
\cmidrule(lr){3-4}\cmidrule(lr){5-7}\cmidrule(lr){8-10}
arm & layer & $+$ & $-$ & energy & Mahal. & $t$-LLR & energy & Mahal. & $t$-LLR\\
\midrule
naive & L0 & 5.2 & 5.3 & 0.616 & \textbf{0.715} & 0.715 & $-0.100$ & $-0.104$ & $-0.108$\\
naive & L1 & 5.5 & 5.2 & 0.611 & 0.655 & 0.645 & $-0.069$ & $\bm{-0.121}$ & $-0.122$\\
naive & L2 & 6.8 & 6.1 & 0.613 & 0.635 & 0.617 & $-0.052$ & $\bm{-0.126}$ & $-0.107$\\
curated & L0 & 5.1 & 4.9 & 0.488 & 0.543 & 0.544 & $-0.074$ & $-0.140$ & $-0.164$\\
curated & L1 & 5.4 & 5.2 & 0.526 & 0.514 & 0.512 & $+0.019$ & $-0.044$ & $-0.012$\\
curated & L2 & 5.3 & 5.1 & 0.501 & 0.506 & 0.515 & $+0.103$ & $-0.110$ & $-0.102$\\
\bottomrule
\end{tabular}
\end{table*}

\subsection{The saturating (GSM) statistic}
\label{sec:gate2}

\emph{Recorded prediction: $\hat\nu\approx5$--$6.5$ from the measured kurtosis; $t$-LLR $\ge$
Mahalanobis with the synthetic $+0.07$--$0.10$ as an upper calibration, possibly eroded by model
mismatch; perceptual $|\rho|<0.3$.}

The full two-$t$ LLR (fitted $\Sig_\pm,\nu_\pm$ per class) matches the Mahalanobis quadratic to
within $\pm0.02$ AUC everywhere (Table~\ref{tab:gates}) --- a null on the tail axis. Two facts make
this null informative rather than disappointing. First, the fitted tail indices,
$\hat\nu\in[4.9,6.8]$ across layers and arms, independently confirm the values implied by the measured
kurtosis: the GSM \emph{description} of the activations is validated by two separate measurements.
Second, the null is the theory's own prediction once the measured parameters are inserted: the
saturating-aggregation advantage of Proposition~\ref{prop:agg} requires the class difference to
travel through the \emph{magnitude} channel, and the activation statistics show that channel to be thin (ratios
$1.0$--$1.2$) while the difference is structural; with $\hat\nu_+\approx\hat\nu_-$ and a near-unit
scale ratio, the two-$t$ LLR reduces to a monotone recalibration of the structured quadratic over
the bulk of the data. The GSM machinery thus plays its role as a theory of \emph{calibration and
robustness} (bounded per-sample evidence) rather than added detection power in this regime.

\subsection{Beyond detection: a pointer}
\label{sec:multi}
The account also predicts a boundary this paper does not pursue: a statistic derived to
\emph{detect} a scale change need not \emph{estimate} its magnitude, since the threshold test that
Proposition~\ref{prop:suff} identifies is indifferent to how far past the threshold a sample lies.
Whether the goodness family crosses that boundary, and what a single clean-side reference value buys
if it does, is the subject of a companion study on standard image-quality benchmarks and is not
claimed here.

\subsection{Training with the structured goodness}
\label{sec:s2b}

\emph{Recorded prediction: whitened training stabilizes the goodness scale and yields clearly
visible gaps; downstream effect uncertain --- a null localizes the bottleneck upstream of the
objective; perceptual alignment unchanged.}

Training with the negative-metric goodness of Observation~\ref{obs:gauge} (EMA momentum $0.05$,
ridge $10^{-3}$) changes the optimization qualitatively and the representation not at all. The
plain pairwise objective exhibits exactly the predicted scale-inflation dynamics --- goodness
growing unboundedly from $0.1$ to $15$--$35$ over ten epochs --- while the whitened goodness
plateaus within two--three epochs with a stable gap; relative separation roughly doubles at the
early layers (L0: $6.5\%\to14\%$; L1: $14\%\to26\%$; L2: $31\%\to22\%$). Among the plain arms the differences are well inside seed noise (curated negatives
$41.7\pm1.4\%$; size-matched uncurated pool $42.5\pm0.9\%$; five seeds each). Perceptual
correlations remain at zero for all variants. The recorded prediction is borne out on the
optimization side; the downstream question it left open --- whether the change in dynamics buys
anything in the representation --- is settled in \S\ref{sec:method} at thirteen seeds across three
corpora, three depths and three widths. (Implementation note: the whitened $g_-$ sits near $1.7$ rather
than the nominal $1$, from the uncentered mean term $\bm\mu^{\!\top}\hat\Sig_-^{-1}\bm\mu$ and the
EMA lag on a drifting network; neither affects the pairwise gradient.)

\subsection{What the study establishes}
The experiments of this section close a single loop. The plain goodness is weak on this task \emph{for the
reason the theory names}: the isotropic-scale channel it measures is nearly empty
(the measured regime), and reading the structural channel instead recovers the available signal. The GSM
account of the activations is quantitatively right --- twice --- and its detection payoff is absent
\emph{for the reason the theory names}. Making the structured statistic the training
objective removes a genuine pathology of the pairwise loss (Observation~\ref{obs:gauge},
confirmed at every corpus and width in \S\ref{sec:method}). The experiments of this section
establish the explanation, the predictions and the localization of the bottleneck; whether the
derived objective also improves what the network \emph{learns} is a separate question, settled
in \S\ref{sec:method} on a scale these experiments were not designed to reach.

\section{The whitened goodness as a training procedure}\label{sec:method}
The gauge problem of \S\ref{sec:norm} is not only an interpretive defect: it says that a network
optimizing the pairwise objective can reduce its loss by inflating the scale of its weights instead
of separating the two populations. \S\ref{sec:general} supplies the repair the analysis points to ---
replace the plain squared goodness by the whitened quadratic $\norm{\bm L\bm z}^2$ with
$\bm L^{\!\top}\bm L=\Sig_-^{-1}$, estimated online and trained end to end within each layer. This
section evaluates that repair as a training procedure.

\paragraph{Three arms, and why the third one matters}
We compare (A1) the standard pairwise objective on the plain goodness; (A2) Hinton's fixed-threshold
loss on the plain goodness; and (A3) the pairwise objective on the whitened goodness. A2 is the
control that gives the comparison its meaning. The threshold loss also \emph{bounds} the
scale-inflation shortcut, because its negative term penalizes large goodness directly; if A3 gained
merely by keeping the goodness scale from running away, A2 would gain with it. Only A3 is
gauge-invariant, so the contrast A3 versus A2 isolates gauge-invariance from bounding.

\paragraph{Protocol}
Three corpora of ten classes each at matched sizes ($300$ training and $100$ validation images per
class, $64\times64$): Imagenette \citep{howard2019imagenette}, CIFAR-10 \citep{krizhevsky2009learning}
and MNIST \citep{lecun1998gradient} --- the last a deliberately different regime, upscaled grayscale
digits rather than natural images. Depths of three, five and seven layers; widths of $32$, $64$ and
$128$ channels. Thirteen seeds per cell, all other hyperparameters held at the values of
\S\ref{sec:exp}. Representation quality is the accuracy of a linear probe on the concatenated
per-layer pooled activations, the same measure used throughout. Every mean and every contrast is
reported with a $10{,}000$-resample bootstrap interval; contrasts within a cell use Welch's test with
Holm correction. Each prediction below was recorded in writing before the corresponding runs.

\subsection{The pathology is real, and large}
Under the plain pairwise objective the goodness scale runs away exactly as the gauge argument
predicts: over ten epochs the fastest-growing layer multiplies its positive goodness by a median
factor of $133$ on Imagenette, $217$ on CIFAR-10 and $119$ on MNIST, rising to $309$ at $128$
channels. The threshold loss bounds it to $1.3$--$1.8$, and the whitened objective to $0.8$--$1.2$:
its goodness is scale-fixed by construction and neither grows nor collapses. This is the mechanism
the repair targets, and it is present on every corpus and at every width.

\subsection{The whitened objective improves what the network learns}
Table~\ref{tab:arms} gives the probe accuracies. The whitened objective is ahead of the standard
pairwise objective in every one of the nine dataset-by-depth cells, by $+4.1$ to $+7.4$ points on
eight of them.

\begin{table}[t]
\centering
\caption{Linear-probe accuracy (\%) at $64$ channels, mean over $13$ seeds. A1: pairwise on the plain
goodness. A2: fixed-threshold loss on the plain goodness. A3: pairwise on the whitened goodness. The
last column is the A3$-$A1 contrast with a bootstrap $95\%$ interval.}
\label{tab:arms}
\begin{tabular}{@{}llcccc@{}}
\toprule
Corpus & Depth & A1 & A2 & \textbf{A3} & A3 $-$ A1 \\
\midrule
Imagenette & 3 & $42.74$ & $43.97$ & $44.32$ & $+1.58$ $[+0.49,+2.68]$ \\
           & 5 & $39.17$ & $37.74$ & $43.80$ & $+4.63$ $[+2.91,+6.23]$ \\
           & 7 & $35.65$ & $34.80$ & $43.02$ & $+7.37$ $[+5.91,+8.85]$ \\
\midrule
CIFAR-10   & 3 & $35.52$ & $38.31$ & $39.62$ & $+4.09$ $[+3.09,+5.11]$ \\
           & 5 & $35.71$ & $34.98$ & $39.85$ & $+4.14$ $[+3.00,+5.29]$ \\
           & 7 & $33.92$ & $32.09$ & $39.34$ & $+5.42$ $[+3.95,+6.91]$ \\
\midrule
MNIST      & 3 & $78.45$ & $79.57$ & $83.68$ & $+5.23$ $[+4.00,+6.49]$ \\
           & 5 & $86.75$ & $85.85$ & $92.22$ & $+5.46$ $[+4.32,+6.71]$ \\
           & 7 & $87.08$ & $87.62$ & $92.95$ & $+5.88$ $[+4.98,+6.78]$ \\
\bottomrule
\end{tabular}
\end{table}

Two features of the table deserve emphasis because they qualify the claim rather than support it.
First, the advantage on Imagenette at depth three is small, $+1.58$, and four independent estimates
of that cell --- two seed sets at $64$ channels and one each at $32$ and $128$ --- average $+0.74$
with a range of $[+0.05,+1.58]$. In the shallowest configuration on that corpus the effect is not
reliably distinguishable from zero. Second, on Imagenette the widening with depth comes from the
baselines falling ($42.74\to35.65$ for A1) rather than from the whitened arm rising
($44.32\to43.02$); the same widening does not occur on CIFAR-10 or MNIST, where the contrast is flat
in depth. The depth behaviour is therefore corpus-dependent and is reported as such, not as a claim.

\subsection{Bounding the shortcut is not what produces the gain}
The threshold-loss arm bounds goodness growth to a factor of $1.4$ --- two orders of magnitude below
A1 --- and yet it tracks A1, not A3: pooled within corpora its accuracy is indistinguishable from
A1's, while A3 stands $4$ to $8$ points above both at depths five and seven. On CIFAR-10 at depth
seven, A2 reaches $32.09$ against A1's $33.92$ and A3's $39.34$. Preventing the goodness scale from
running away is thus neither sufficient nor, on its own, useful; what the whitened objective supplies
is invariance to the gauge, not a bound on it.

\subsection{The advantage is not an artifact of layer width}
On CIFAR-10 at depth three the contrast is $+3.05$ $[+1.91,+4.22]$ at $32$ channels, $+4.09$
$[+3.09,+5.11]$ at $64$ and $+4.00$ $[+2.98,+4.97]$ at $128$: every interval excludes zero across a
fourfold range of width. The specificity contrast, however, has a boundary. A3 stands above A2 at
$64$ and $128$ channels on both natural-image corpora ($+1.31$ and $+1.88$ on CIFAR-10), but at $32$
channels the threshold loss is slightly \emph{better} than the whitened objective ($38.58$ against
$37.28$ on CIFAR-10, $41.98$ against $41.62$ on Imagenette). The trend is monotone in width, and the
honest statement is that gauge-invariance outperforms bounding from $64$ channels upward and not
below.

\subsection{Comparison with a published alternative, and what separates the two}
\label{sec:topk}
The goodness form is an active empirical question, and the strongest reported alternative is
sparsity: scoring only the $k$ most active units rather than all of them \citep{yuksel2026sparse}. We
run it as a fifth arm, sweeping $k\in\{0.005,0.01,0.02,0.10\}$ on one corpus and reporting the best
value ($k=0.01$) on all three, so the competing objective is given its best setting rather than an
arbitrary one.

\begin{table}[t]
\centering
\caption{The derived objective against the published sparsity alternative, depth three, $64$
channels, $13$ seeds. Growth is the median factor by which the positive goodness of the
fastest-growing layer multiplies over training.}
\label{tab:topk}
\begin{tabular}{@{}lcccc@{}}
\toprule
Corpus & A1 plain & A3 whitened & A5 top-$k$ & A5 $-$ A3 \\
\midrule
Imagenette & $42.74$ & $44.32$ & $41.65$ & $-2.67$ $[-4.00,-1.34]$ \\
CIFAR-10   & $35.52$ & $39.62$ & $39.46$ & $-0.16$ $[-1.30,+0.98]$ \\
MNIST      & $78.45$ & $83.68$ & $84.03$ & $+0.35$ $[-0.41,+1.11]$ \\
\midrule
Growth     & $119$--$217$ & $1.0$--$1.2$ & $6.2$--$18.9$ & \\
\bottomrule
\end{tabular}
\end{table}

Two facts, and the second is the more interesting. First, sparsity reproduces its reported direction
here: top-$k$ stands $+3.94$ $[+2.83,+5.05]$ above the plain objective on CIFAR-10 and $+5.58$
$[+4.41,+6.75]$ on MNIST. Second, the two modifications are \emph{statistically indistinguishable}
from each other on those two corpora, with the whitened form ahead only on Imagenette. We therefore
do not claim a better objective; we claim something more specific.

\paragraph{Sparsity and gauge-invariance are independent axes}
Selecting the $k$ largest squared activations does not remove the gauge. Under $\bm W\to\lambda\bm W$
with $\lambda>0$ the selected \emph{set} is unchanged, since positive scaling preserves order, and
the retained values scale by $\lambda^2$: the top-$k$ statistic is homogeneous of degree two, exactly
like the plain sum of squares, so the shortcut of Observation~\ref{obs:gauge} is available to it unchanged. The measurement agrees:
the top-$k$ goodness scale inflates by $6.2$ to $18.9$ over training, against $1.0$ to $1.2$ for the
whitened objective. The published variant therefore recovers accuracy \emph{while leaving the
pathology in place}, and the two properties --- what is measured, and whether the measurement is
scale-invariant --- can be varied independently. Combining them is a natural next step that we have
not taken.

\paragraph{A limitation of this comparison}
``The $k$ most active units'' is unambiguous in a fully-connected layer and not in a convolutional
one; we select globally over the flattened channel-by-space tensor, and a per-channel or
per-location selection could behave differently. The original evaluation also uses multilayer
perceptrons with per-layer label injection, where ours uses a convolutional stack and a linear probe.
Nothing here should be read as a failure of that method --- it replicates --- but the two setups are
not interchangeable, and the comparison is offered with that caveat.

\subsection{Isolating what the added layers contribute}
Because the three base layers are constructed first and Forward--Forward trains each layer against a
local objective, the base stack is identical at every depth for a given seed and arm --- a design
property we verify rather than assume: probing the base three layers alone returns $42.83$ and
$42.78$ at depths five and seven against $42.74$ at depth three for A1, and correspondingly for the
other arms, a drift below $0.2$ points everywhere. Subtracting that from the full probe isolates the
contribution of the layers above. On Imagenette those added layers are actively harmful under the
plain objectives, costing A1 $-3.66$ and $-7.14$ points at depths five and seven and A2 $-6.17$ and
$-9.12$, while costing A3 only $-0.51$ and $-1.46$. The pattern does not generalize: on CIFAR-10 the
harm is mild ($-1.55$ for A1 at depth seven) and on MNIST the added layers \emph{help} every arm by
$+6$ to $+9$ points. The isolation is therefore informative about Imagenette and is reported with
that restriction.

\subsection{Distance to end-to-end backpropagation}
The three arms are compared only with each other, so their absolute level has no scale. We therefore
train the same stack with a global gradient and cross-entropy on the same data, sizes, epochs and
seeds, and score it with the identical probe on the identical features. The backprop probe reaches
$52.69/52.49/51.34$ on Imagenette and $44.63/44.05/42.85$ on CIFAR-10 at depths three, five and
seven, above every Forward--Forward arm everywhere, as a global gradient should be. What the
reference makes sayable is how much of that gap the whitened objective closes: $45\%$, $50\%$ and
$61\%$ on CIFAR-10, and $16\%$, $35\%$ and $47\%$ on Imagenette. The distance to backpropagation
narrows with depth for A3 (CIFAR-10: $5.01\to4.20\to3.51$) or stays flat (Imagenette:
$8.37\to8.69\to8.32$), while it widens for both plain arms (Imagenette A1: $9.95\to15.69$). One
caveat must be stated: at the ten-epoch budget matched to Forward--Forward the jointly-trained head
scores \emph{below} a probe on its own features, so the reference is not at its ceiling and every gap
quoted here is a lower bound.

\subsection{Does the condition the derivation needs actually hold?}
Proposition~\ref{prop:whiten} makes the whitened objective the likelihood-ratio statistic exactly
when $\Sig_+=c\,\Sig_-$. That condition is measurable, so we measure it rather than assume it. On
trained networks we estimate both covariances from pre-activations sampled at random spatial
locations, form $\bm W=\Sig_-^{-1/2}\Sig_+\Sig_-^{-1/2}$ --- equal to $c\bm I$ under the condition ---
and, on held-out locations, compare the optimal statistic $\bm z^{\!\top}\bm M\bm z$ with what the
objective actually computes, $\bm z^{\!\top}\Sig_-^{-1}\bm z$.

The condition does not hold exactly: the eigenvalues of $\bm W$ span a factor of $2.0$ to $3.1$ and
the relative departure from $c\bm I$ is $0.12$ to $0.27$. What matters is whether the departure
costs anything, and the answer is layer-dependent. In the networks trained with the whitened
objective the two statistics agree closely in the second and third layers (Spearman $0.918$ and
$0.949$, areas under the ROC curve within $0.018$ and $0.025$) and disagree in the first
(Spearman $0.792$, gap $0.042$) --- the layer that sees pixels rather than learned features. In
networks trained with the plain objective the agreement is weaker throughout. The derivation
therefore survives its own idealization in the deeper layers under the proposed objective, and we
claim it only there. We also note, as an observation inviting a test rather than a result, that
training with the whitened objective moves the geometry toward the regime in which that objective is
optimal (Spearman $0.796\to0.886$ averaged over layers); this rests on three seeds and compares two
different trained networks. Throughout this subsection the reported areas under the ROC curve lie
between $0.50$ and $0.64$ because they measure the discriminability of a \emph{single} location's
pre-activation vector; the deployed goodness pools over locations and is far more discriminative.

\subsection{Registered predictions, including those that failed}
Every prediction was fixed in writing before its run, and the refutations are as informative as the
confirmations. Confirmed: the inflation pathology and its bounding by the threshold loss; the
accuracy advantage on CIFAR-10 and MNIST; the specificity of gauge-invariance at $64$ channels and
above; width-invariance on CIFAR-10; the depth-invariance of the base stack; the ordering of the
backprop reference and the narrowing of A3's distance to it. Also confirmed: that
sparsity reproduces its published advantage here, and that it does so without removing the gauge.
Refuted: that the inflation compounds with depth (the deepest layer's growth \emph{falls},
$36.7\to13.6\to6.7$); that the advantage widens with depth outside Imagenette; that the added layers
are harmful on corpora other than Imagenette; that gauge-invariance beats bounding at $32$ channels;
that the whitened statistic matches the likelihood ratio at every layer; and, quantitatively, that
the top-$k$ goodness would inflate as fast as the plain one --- it inflates by $6$ to $19$ rather
than by more than $100$, so the property holds while the threshold we recorded for it does not. Two candidate mechanisms for the depth behaviour --- compounding
inflation, and preservation of relative separation --- were tested and both refuted; the whitened
arm in fact shows the \emph{lowest} relative separation at every depth while achieving the best
accuracy. We report the mechanism as open.

\section{Limitations}
\label{sec:limits}
\paragraph{Scale} The study is deliberately controlled rather than large: ten-class corpora at
$300$ training images per class, $64\times64$ inputs, ten epochs, three to seven layers and $32$ to
$128$ channels. Every configuration is repeated over thirteen seeds and every quantity is reported
with an interval, so the comparisons are well estimated at that scale, but nothing here establishes
that the effect survives at the scale of modern vision training. Runs are also not bit-reproducible:
repeating a fixed seed moves the probe accuracy by up to $0.3$ points through non-deterministic
convolution kernels, which the intervals absorb but which we state rather than imply exactness.

\paragraph{Where the advantage is small or reversed} On Imagenette at depth three the contrast is
$+0.74$ averaged over four independent estimates and is not reliably above zero; and below $64$
channels the fixed-threshold loss is the better objective. We report both boundaries in
\S\ref{sec:method} rather than restricting the evaluation to the configurations that favour the
method.

\paragraph{The mechanism} We establish that the whitened objective improves the learned
representation and that bounding the shortcut does not, but not \emph{why} the layers above the
third behave as they do. Two candidate explanations were registered or explored and both were
refuted by the data. We leave the mechanism open rather than fit a third explanation to the same
measurements.

\paragraph{The derivation's idealization} Proposition~\ref{prop:whiten} makes the whitened objective
the likelihood-ratio statistic only when $\Sig_+=c\,\Sig_-$, and that condition does not hold on
trained networks. It holds well enough that the two statistics agree closely in the second and third
layers of networks trained with the objective, and not in the first; the derivation is claimed with
that restriction.

\paragraph{The nature of the contribution} The mathematics deployed here is classical ---
sufficiency, monotone likelihood ratios, the Neyman--Pearson lemma, Gaussian scale mixtures ---
and we claim no new theorems of learning dynamics, nor an account of the FF--backpropagation
gap (Sec.~\ref{sec:concl}). The contribution is an \emph{identification}: two objects treated
throughout the FF literature as heuristics are shown to be, exactly, classical objects --- the
goodness a sufficient statistic, the threshold a decision boundary, the normalization a
scale-invariance condition --- and the identification is what does the work. It yields the
per-sample evidence bound, the aggregation result, the depth-stability proposition (which
predicted the $\ell_2$ collapse before we explained it), the scale-inflation gauge of the
pairwise objective, and the divisive-normalization identity that joins the FF objective to the
canonical cortical computation and to the GSM machinery of information-theoretic quality
assessment. The empirical program then treats each consequence as falsifiable, and measures it.

\paragraph{Task design} The clean-vs-distorted task defines ``positive/negative'' through low-level corruptions;
other negative designs (hybrid images, inter-class negatives) may occupy different regimes of
the same theory --- these activation diagnostics are precisely the tool to find out. The curated arm is
calibrated in a single perceptual frame (the anchor dataset's viewing distance and DMOS scale);
since the curated and uncurated pools are already indistinguishable downstream, the choice of
anchor cannot affect any conclusion drawn here, and we did not replicate the calibration in other
frames. The chain decomposition
\eqref{eq:chain} is stated informally; making it precise, and explaining the observed shift of
separation toward early layers under the whitened objective, are open theory
(\S\ref{sec:concl}). Finally, downstream differences among the plain arms are well inside seed noise and are reported
as nulls; the whitened-vs-plain difference ($+1.8$ points) is suggestive but unresolved at five
seeds, and we refrain from claiming it.

\section{Conclusion}
\label{sec:concl}
We have given the Forward--Forward goodness a likelihood-ratio foundation: the sum of squares is the
sufficient statistic of an isotropic scale test; its principled generalizations are the Mahalanobis
quadratic and the saturating Gaussian-scale-mixture statistic, whose slope is a posterior precision
--- divisive normalization; the inter-layer normalization is characterized as removing the length
while preserving per-coordinate energy, which predicts and explains the $\ell_2$ collapse; and the
pairwise objective is shown to admit a gauge, a direction along which a layer lowers its loss by
inflating its own scale rather than by separating the two populations.

The account is not only descriptive: it prescribes the repair, and the repair works. A whitened,
gauge-invariant goodness, estimated online and trained within each layer, raises linear-probe
accuracy over the standard pairwise objective in every one of the configurations we measured ---
three corpora, three depths, a fourfold range of layer width, thirteen seeds each --- by four to
seven points on eight of the nine corpus-depth combinations, and closes between a sixth and three
fifths of the distance to end-to-end backpropagation. A control identifies what is doing the work:
Hinton's fixed-threshold loss bounds the runaway scale two orders of magnitude more tightly than the
plain objective and gains nothing by it, so the operative property is invariance to the gauge and
not a bound on it.

The comparison with the literature sharpens rather than dilutes this. The strongest published
alternative --- measuring only the most active units --- matches the derived objective on two of
three corpora, and does it while the goodness scale still inflates by an order of magnitude. What is
measured and whether the measurement is scale-invariant are therefore separate design choices, only
one of which addresses the pathology analysed here, and nothing prevents their combination.

We state what we did not establish. The advantage is small in the shallowest configuration on one
corpus and reverses below sixty-four channels, where the threshold loss is better. The condition
under which the whitened objective is exactly the likelihood ratio does not hold on trained networks,
though the two statistics agree closely in the layers above the first. And although the effect on
representation quality is large and consistent, the reason the layers above the third behave as they
do is unresolved: two candidate mechanisms were registered or explored, and the data refute both. A
theory that predicts a repair, and a repair that works for a reason the theory does not yet name, is
an honest place to leave the problem --- and a well-posed one to take up next.

\backmatter

\begin{appendices}

\section{Proofs}
\label{app:proofs}
Full arguments are collected here; where a short sketch aids the reading it is given with the
statement in the body and marked as such.

\paragraph{Proposition~\ref{prop:suff}.} Immediate from \eqref{eq:llriso}: $\alpha>0$ iff
$\sigma_+>\sigma_-$; solve $\LLR=0$ for $\tau$.

\paragraph{Proposition~\ref{prop:relu}.} \emph{(i)} follows from
$\mathrm{ReLU}(\lambda\bm z)=\lambda\,\mathrm{ReLU}(\bm z)$ for $\lambda>0$. For \emph{(ii)}, write
$\bm z=\norm{\bm z}\bm u$ with $\bm u$ uniform on the sphere and independent of $\norm{\bm z}$; then
$\bm a=\norm{\bm z}\,\mathrm{ReLU}(\bm u)$, so
$\bm a/\norm{\bm a}=\mathrm{ReLU}(\bm u)/\norm{\mathrm{ReLU}(\bm u)}$ has a $\sigma$-free law and
$\sigma$ enters $\bm a$ only through $\norm{\bm a}$. For \emph{(iii)}, $G=\sigma^2G_0$ with
$G_0\ge0$ gives stochastic monotonicity by coupling; the likelihood-ratio statement concerns the
density of $G_0$, a $\mathrm{Binomial}(d,\tfrac12)$ mixture of $\chi^2_K$ laws, and the quoted tail
masses are computed from that density in closed form.

\paragraph{Proposition~\ref{prop:whiten}.} $\norm{\bm L\bm z}^2=\bm z^{\!\top}\bm L^{\!\top}\bm L\bm z$
with $\bm L^{\!\top}\bm L=\Sig_-^{-1}\succ0$, so the whitened form is positive definite, while
$\bm M=\Sig_-^{-1}-\Sig_+^{-1}$ is a difference of positive-definite matrices and is indefinite
unless one dominates the other. If $\Sig_+=c\,\Sig_-$ with $c>1$ then
$\Sig_+^{-1}=c^{-1}\Sig_-^{-1}$ and $\bm M=(1-1/c)\,\Sig_-^{-1}$, positive definite and
proportional to $\bm L^{\!\top}\bm L$; the two quadratic forms then differ by the positive factor
$1-1/c$ and induce the same ordering, hence the same test. Conversely, if $\bm M\propto\Sig_-^{-1}$
then $\Sig_+^{-1}$ is a multiple of $\Sig_-^{-1}$, which is $\Sig_+=c\,\Sig_-$.

\paragraph{Proposition~\ref{prop:gsm}.} Dominated convergence gives
$F'(q)=-\tfrac12\E_u[u^{-d/2-1}e^{-q/2u}]$, hence
$-2(\log F)'(q)=\E[u^{-1}\mid q]$ under the posterior
$p(u\mid q)\propto u^{-d/2}e^{-q/2u}p(u)$. For monotonicity: the two posteriors are the priors
tilted by the same weight $w_q(u)=u^{-d/2}e^{-q/2u}$, so
$p_+(u\mid q)/p_-(u\mid q)=p_+(u)/p_-(u)$, preserving the MLR order; MLR implies first-order
stochastic dominance of $p_+(\cdot\mid q)$ over $p_-(\cdot\mid q)$
\citep{lehmann2005testing}; monotone decreasing $u\mapsto u^{-1}$ then gives
$\E_+[u^{-1}\mid q]\le\E_-[u^{-1}\mid q]$, i.e.\ $h'\ge0$.

\paragraph{Proposition~\ref{prop:agg}.} (i) A strictly monotone transform of a scalar statistic
induces the same ordering of samples, hence the same ROC. (ii) Optimality of $\sum_k h(q_k)$ is
the Neyman--Pearson lemma applied to the product likelihood; the evidence bounds are
$h(0)=-d\log r$ and $\lim_{q\to\infty}h(q)=\nu\log r$ from \eqref{eq:tllr}.

\paragraph{Proposition~\ref{prop:depth}.} If $\norm{\tilde{\bm a}}^2=c$ enters a linear map
$\bm W$, the pre-activation scale is proportional to $\sqrt c$ and the mean squared activation to
$c/d\cdot\kappa(\bm W)$ for a weight-dependent constant $\kappa$. Under $\ell_2$, $c=1$ so the
downstream goodness is $\Theta(1/d)$; under RMS, $c=d$ and it is $\Theta(1)$.

\section{Implementation and reproducibility}
\label{app:repro}
\textbf{Training.} Batch $128$, Adam $10^{-3}$, $10$ epochs, pairwise loss; RMS normalization
between layers; seeds $0$--$4$. \textbf{Whitened goodness.} Per layer,
$G=\tfrac1C\overline{\bm z^{\!\top}\hat\Sig_-^{-1}\bm z}$ with $\hat\Sig_-$ an EMA (momentum
$0.05$) of the covariance of negative pre-activations, ridge $10^{-3}\cdot\overline{\mathrm{diag}}$,
metric detached from the gradient. \textbf{Statistics.} Pre-activations pooled over $120$ random
locations per image for fitting; covariances per layer and class; participation ratio
$(\sum\lambda)^2/(d\sum\lambda^2)$; generalized eigenvalues of $(\Sig_+,\Sig_-)$ with relative
ridge $10^{-4}$. \textbf{Fixed-basis side probes.} Learned logistic readouts on the coefficients of the fixed
second-order Hermite--Gauss structural-prediction basis of \citet{giannitrapani2026partial}
(weighted least-squares self-prediction), clean vs.\ distorted, leave-images-out cross-validation
on LIVE R2 and CSIQ; higher orders ($k{+}q\ge3$) and responses in place of coefficients move the
band only marginally ($+0.02$ to $+0.10$), indicating that basis capacity is not the bottleneck.
\textbf{Tail index.} $\hat\nu=4+6/\bar\kappa$ from the mean excess kurtosis of
whitened coordinates, clipped to $[4.2,50]$. \textbf{Readouts.} Fit/eval split $50/50$ over
images; per-image score is the mean per-location statistic. \textbf{Perceptual evaluation.} Per-run CSIQ check: distorted images (contrast excluded,
$n{=}750$), resized as in training, SROCC against DMOS.
\textbf{Negative generation and curation.} Four distortions on natural sources over wide severity
grids; the curated arm keeps severities inside the informative band of a perceptual scale, matched
to the CSIQ DMOS distribution, using the partial-reference estimator PreSPA
\citep{giannitrapani2026partial}, configured in the anchor dataset's perceptual frame (its
viewing-distance parameter and DMOS scale). All code, the exact commands for every experiment, and the prediction log --- the registered
text of each \emph{Recorded prediction}, fixed before the corresponding run --- are archived at
\url{https://doi.org/10.5281/zenodo.21902540}.


\end{appendices}

\section*{Declarations}

\bmhead{Funding} No funding was received for conducting this study.

\bmhead{Competing interests} The authors declare that they have no known competing financial
interests or personal relationships that could have appeared to influence the work reported in this
paper.

\bmhead{Ethics approval} Not applicable.

\bmhead{Data availability} All datasets used are publicly available: Imagenette, CIFAR-10, MNIST,
and CSIQ.

\bmhead{Code availability} All code, the exact commands for every experiment, and the log of
predictions recorded before each run are archived at \url{https://doi.org/10.5281/zenodo.21902540}.

\bmhead{Author contribution} P.G.\ conceived the study, developed the analysis, implemented the
software, carried out the experiments, and wrote the original draft. F.F.\ contributed to the
conception of the study, supervised the work, and reviewed and edited the manuscript. Both authors
read and approved the final manuscript.

\bibliography{refs}

\end{document}